\documentclass{article} 
\usepackage[preprint]{iclr2027_conference}
\usepackage{times}

\usepackage{amsmath,amsfonts,bm}

\def\eqref#1{equation~\ref{#1}}

\def\1{\bm{1}}

\DeclareMathAlphabet{\mathsfit}{\encodingdefault}{\sfdefault}{m}{sl}
\SetMathAlphabet{\mathsfit}{bold}{\encodingdefault}{\sfdefault}{bx}{n}

\usepackage{enumitem}
\usepackage{amsthm}
\usepackage{graphicx}
\usepackage{subcaption}
\usepackage{amssymb}

\usepackage{hyperref}
\hypersetup{
    colorlinks=true,
    citecolor=purple, 
    linkcolor=purple,  
    urlcolor=cyan,  
    pdfborder={0 0 0} 
}
\usepackage{cleveref}
\usepackage{url}
\usepackage{algorithm}
\usepackage{algpseudocode}
\usepackage[dvipsnames,svgnames,table]{xcolor}

\newtheorem{theorem}{Theorem}
\newtheorem{lemma}{Lemma}

\newtheorem{corollary}{Corollary}
\newtheorem{assumption}{Assumption}
\newtheorem{definition}{Definition}

\definecolor{QiBlueGray}{HTML}{3B6BC4}

\title{Shift-aware score transport for privacy\\ auditing without retraining}

\author{Yaohong Yang\(^{1,2}\), Gauri Pradhan\(^3\), Qi Chen\(^{1,2}\), Samuel Kaski\(^{1,2,4}\), Antti Honkela\(^{3}\)  \\
\(^1\) ELLIS Institute Finland\\
\(^2\)Department of Computer Science, Aalto University, Finland\\
\(^3\)Department of Computer Science, University of Helsinki, Finland \\
\(^4\) Department of Computer Science, University of Manchester, United Kingdom\\
\texttt{\{yaohong.yang, qi.chen.inf, samuel.kaski\}@aalto.fi} \\
\texttt{\{gauri.pradhan, antti.honkela\}@helsinki.fi}
}

\begin{document}

\maketitle

\begin{abstract}
Privacy auditing empirically assesses privacy leakage by using membership inference attacks to establish lower bounds on privacy loss.
We consider an observational auditing setting in which the auditor cannot insert canaries, control data inclusion, or retrain the target model, and must instead rely on fixed sets of known members and non-members. 
This setting is relevant for large deployed models or when retraining is restricted by computational or access constraints. 
In this case, non-members available for auditing may differ in distribution from the training data, causing membership-inference scores to be skewed due to the distribution shift.
We address this problem by estimating the counterfactual in-distribution non-member score distribution using the available out-of-distribution (OOD) non-members. 
Our proposed method learns the relationship between data representations and membership-inference scores from OOD non-members and transports it to the target covariate distribution. 
Because this transport may become unreliable under extrapolation or imperfect conditional invariance, we further introduce a conservative correction calibrated directly to the nominal transported privacy level. 
We construct controlled calibration tasks from the available non-member data to estimate how representation-space displacement inflates the audit, and use this estimate to correct the transport-induced audit error. 
Across multiple datasets with distribution shifts, our method yields substantially more stable privacy estimates while retaining tighter lower bounds than baselines.
\end{abstract}

\section{Introduction}


Membership inference attacks (MIAs) distinguish training-set members from non-members based on a model's behavior, and are widely used to empirically assess privacy leakage of machine learning models \citep{shokri2017membership,yeom2018privacy,carlini2022membership}. 
A fundamental assumption underlying such comparisons is that members and non-members are drawn from same data distributions. 
In practice, however, available non-members may differ systematically from the training distribution \citep{liang2024real,duan2024membership,meeus2025sok}. 
Under such distribution shift, membership-inference scores can capture distributional differences in addition to membership information, effectively confounding membership inference with distribution-shift detection \citep{duan2024membership,das2025blind,zhang2025position}. 
Consequently, the observed membership signal can substantially overestimate privacy leakage \citep{maini2024llm,even2026membership}.

This issue is particularly consequential for privacy auditing, where MIAs are used to establish empirical lower bounds on privacy loss \citep{jagielski2020auditing,tramer2022debugging}. 
Most existing auditing methods avoid this confounding by controlling the training process—for example, by retraining models \citep{jagielski2020auditing,nasr2023tight}, randomizing the inclusion of candidate examples \citep{steinke2023privacy, mahloujifar2025auditing}, or inserting canaries \citep{pillutla2023unleashing,steinke2023privacy}. 
Such intervention may be unavailable for already-deployed or third-party models and can be prohibitively expensive for large models \citep{kazmi2024panoramia,cebere2026privacy}.
Therefore, we consider an observational auditing setting where the model has already been trained and cannot be retrained \citep{kazmi2024panoramia,cebere2026privacy}. 
The auditor only has black-box access to the target model together with fixed sets of examples known to be training-set members and non-members. 
This setting removes the ability to randomize membership, and consequently introduces a challenge that is largely absent when the auditor controls data inclusion: the available non-members might not follow the same distribution as the training data. 
Correcting for this distributional confounding is therefore necessary to obtain a valid lower bound on privacy leakage.

\begin{figure}[t] 
    \begin{center}          
    \centering
        \includegraphics[width=\textwidth]{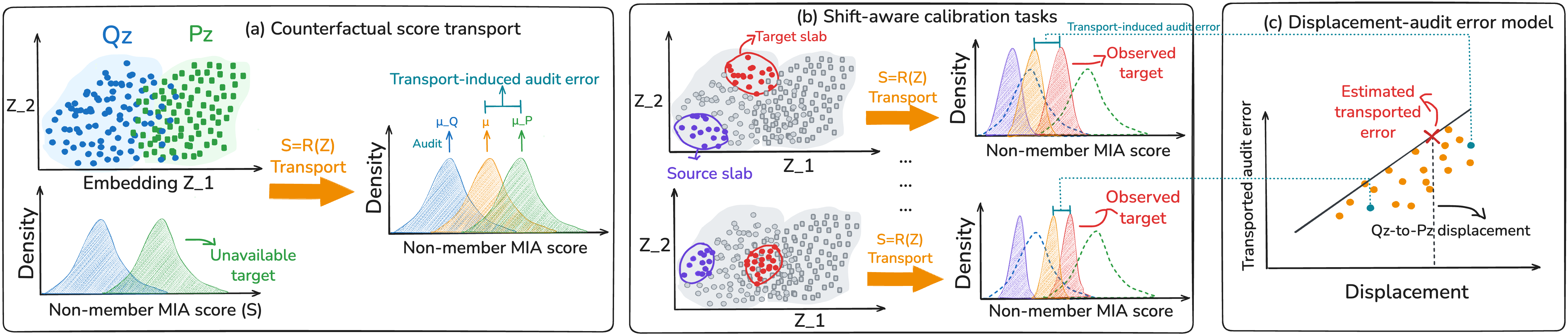}     
    \caption{Overview of our shift-aware privacy auditing framework. 
    (a)\textbf{ Counterfactual score transport.} We learn the relationship \(S=R(Z)\) between data representations \(Z\) and non-member MIA scores \(S\) from the available OOD non-members \(Q_Z\), and transport this relationship to the target representation distribution \(P_Z\) to estimate the unavailable IID non-member score distribution.
    (b)  We construct \textbf{shift-aware calibration tasks} within the OOD non-members by transporting scores from source slabs to target slabs along the \(Q_Z\!\rightarrow\!P_Z\) shift direction. Since the target-slab scores are observed, the resulting transport-induced audit error can be measured.
    (c) The \textbf{displacement–audit error model} measures how the audit error changes with representation displacement, which is evaluated at the \(Q_Z\!\rightarrow\!P_Z\) displacement to estimate the transport-induced error used for correction.
    }
    \label{fig:framework}
    \end{center}
    \vskip -0.2in
\end{figure}

A natural solution is to reconstruct the missing in-distribution non-member score distribution. 
We model how the MIA score varies with data distributions among the available OOD non-members. 
This nominal transport is motivated by a conditional score-invariance assumption: after conditioning on the representation, the non-member score distribution is invariant across domains. 
This allows the conditional score mechanism learned from the reference distribution to be evaluated under the target distribution. 
However, conditional transport alone is insufficient. 
Even when conditional score invariance is valid on the common support, the target distribution may place substantial mass in regions that are poorly represented by the available non-members. 
Predicting scores there necessarily requires extrapolation.
Moreover, the conditional-invariance assumption may be imperfect. 

Our key idea is to correct the transport-induced audit error by constructing controlled calibration tasks within the observed non-member population. 
We partition the OOD non-members into slabs separated along the source-to-target direction, train the same score-transport model on a source-side slab, and evaluate it on a more target-like slab whose true scores remain observable. 
Because both the predicted and observed score samples are available, we can directly measure how much such extrapolation inflates the resulting privacy lower bound. 
Repeating this procedure over multiple controlled displacements yields an empirical relationship between representation-space displacement and transport-induced audit error. 
We then evaluate this error-growth relationship at the actual OOD-to-IID displacement and use the resulting estimate to correct the nominal transported audit. 

As shown in \Cref{fig:framework}, counterfactual score transport (panel (a)) provides a nominal estimate of the missing IID non-member distribution, while the shift-aware calibration (panel (b) and (c)) determines how much of the resulting privacy evidence should be trusted. 
In this work, we use the \(\mu\)-Gaussian differential privacy (\(\mu\)-GDP) formulation \citep{dong2022gaussian} as a framework for quantifying empirical privacy leakage, and report the empirical lower bound on \(\mu\).

Our contributions are threefold:
\begin{itemize}[leftmargin=2em]
    \item We estimate the missing IID non-member score distribution by applying a score model learned from OOD non-members to covariates drawn from the target distribution, under a conditional invariance assumption (\Cref{sec:transport}).
    \item We identify when this assumption may fail (\Cref{sec:diagnostic}) and construct calibration tasks to measure how representation-space displacement affects transport-induced audit error, then use the resulting error-growth estimate to correct the transported audit. (\Cref{sec:correction}).
    \item Across image and language benchmarks with synthetic and natural distribution shifts, we demonstrate that uncorrected observational auditing can substantially overestimate privacy leakage.
    We show that the proposed methods stabilize the audit when transport requires substantial extrapolation or conditional invariance is imperfect, while retaining tighter privacy lower bounds than baselines (\Cref{sec:exps}).

\end{itemize}

\section{Background and related work}

\textbf{Privacy auditing.} 
An auditing algorithm takes a learning algorithm \(\mathcal{A}\) and a privacy hypothesis \(\mathcal{H}\), and tests whether the observed behavior of  \(\mathcal{A}\) satisfies \(\mathcal{H}\) \citep{jagielski2020auditing,nasr2023tight}.
Typically, the auditor performs a randomized experiment with known membership labels \(A_i\), applies a MIA to produce guesses \(T_i\in\{-1,0,1\}\) for each example, where \(T_i=0\) denotes abstention, and then evaluates \(\mathcal{H}\) using the attack outcomes and the known memberships.
For an error probability \(p\in [0,1]\), the audit is \(p\)-valid if \(\mathbb{P}(\text{Reject }\mathcal{H}\mid \mathcal{A}\text{ satisfies }\mathcal{H})\leq p\).
Empirical privacy parameters can then be obtained from the strongest privacy hypothesis that the audit can confidently reject.

Early approaches often require retraining models on neighboring datasets \citep{jagielski2020auditing,nasr2021adversary,lu2022general,nasr2023tight,zanella2023bayesian}, while more recent one-run methods reduce this cost by controlling the inclusion of multiple candidate examples within a single training run \citep{steinke2023privacy, mahloujifar2025auditing,xiang2025tight, keinan2026well}. 
However, these approaches still rely on intervention in the training process, which requires the membership indicators \(A_i\in\{-1,1\}\) to be independent.

\textbf{Auditing under distribution shift.}
Distribution shift is known to affect membership inference by changing the score distributions of non-members, potentially overestimate the privacy leakage \citep{yichuan2024assessing,meeus2024inherent,das2025blind,zhao2025unlocking}. 
\cite{kazmi2024panoramia} addresses the lack of in-distribution non-members by training a generative model on known members and using generated samples as non-members, while accounting for distributional mismatch between real and generated data through a baseline classifier.
\cite{cebere2026privacy} instead explicitly corrects the shift through global composition or propensity-based conditional adjustment; its guarantees, however, rely on distributional overlap and accurate propensity estimation, and can become overly conservative under severe shift.
\cite{kalemaj2025observational} evaluates label privacy without intervening on the training dataset by exploiting naturally occurring variation in the data.
Related work has also used conditional modeling, particularly quantile regression, to estimate example-dependent non-member score distributions \citep{bertran2023scalable}. 
\cite{thaker2025membership} study this approach under {class-support shift}, where the available background data omit entire classes or subclasses. 
In contrast, we consider covariate shift, where the member and available non-member populations share the same task and label space but differ in their input distributions.

\textbf{Measuring privacy leakage with \(\mu\)-GDP.}
We quantify empirical privacy leakage using the \(\mu\)-GDP scale, which characterizes distinguishability through the hypothesis-testing trade-off between neighboring datasets \citep{dong2022gaussian}.
\begin{definition}
     An algorithm \(\mathcal{A}\) is \(\mu\)-GDP if for all neighboring datasets \(\mathcal{D}\) and \(\mathcal{D}'\),
    \begin{equation}
        T(\mathcal{A}(\mathcal{D}),\mathcal{A}(\mathcal{D}'))(\alpha)\geq T(\mathcal{N}(0, 1),\mathcal{N}(\mu, 1))(\alpha), \; \forall \alpha \in[0,1],
    \end{equation}
     where \(T(\mathcal{A}(\mathcal{D}),\mathcal{A}(\mathcal{D}'))(\alpha)=\inf\{\beta_\phi:\alpha_\phi\leq\alpha\}\) with Type-I error \(\alpha_\phi\) and Type-II error \(\beta_\phi\) from all rejection rules \(\phi\).
\end{definition}
For privacy audit with \(\mu\)-GDP, the auditor performs membership guesses and compares the observed guessing performance against that permitted by a candidate \(\mu\)-GDP guarantee, yielding a lower confidence bound \(\hat{\mu}\) on the privacy parameter \citep{nasr2023tight,mahloujifar2025auditing}.

\section{Problem formulation}

We consider post-hoc privacy auditing of a fixed target model \(\mathcal M\) without retraining or controlled data inclusion \citep{cebere2026privacy}.
The auditor has only black-box access to the model, meaning it can only observe the model outputs but not its internal representations or intermediate computations.
Let \(s=S(x)\in\mathbb R\) denote a membership-inference score, oriented such that a larger score indicates stronger evidence of membership.
Let \(A\in\{-1,1\}\) denote membership, where \(A=1\) indicates a training-set member. 
Let \(P_X\) denote the training-data distribution and \(Q_X\) an out-of-distribution population from which the available non-members are drawn.
The auditor has access to
\begin{equation}
     \mathcal D_M =\{\mathcal{X}_M,\mathcal{S}_M\}= \{(x_{i,M},s_{i,M})\}_{i=1}^{n_M}, \qquad x_{i,M}\sim P_X, \qquad A_i=1
\end{equation}
consisting of known members and their target-model scores, and
\begin{equation}
    \mathcal D_{N}^Q =\{\mathcal{X}_{N}^Q,\mathcal{S}_{N}^Q\}=\{(x_{i,N}^Q,s_{i,N}^Q)\}_{i=1}^{n_Q}, \qquad x_{i,N}^Q\sim Q_X, \qquad A_i=-1
\end{equation}
consisting of known OOD non-members and their target-model scores.

Given \(n=|\mathcal S_M|+|\mathcal S_N^Q|\) auditing samples, suppose the auditor applies a fixed decision rule to predict member, non-member, or abstain, and makes \(c\) membership guesses, of which \(k\) are correct. We denote the resulting lower confidence bound on the \(\mu\)-GDP parameter by
\begin{equation}
    \hat{\mu}(\mathcal S_N^Q,\mathcal S_M)=\mu_L(c,k,n),
\end{equation}
where \(\mu_L(c,k,n)\) is the lower bound obtained from the one-run auditing procedure.

Let \(S_N^P\) denote the IID non-member score \(S(X)\) for \(X\sim P_X\).
The desired reference distribution for privacy auditing is the score distribution of IID non-members 
\begin{equation}
    F_{S_N^P} = \mathbb{P}_{X\sim P_X}(S(X)\le s\mid A=-1),
\end{equation}
which is unavailable at deployment.
Our goal is therefore to correct the privacy lower bound \(\hat{\mu}\) obtained from \((\mathcal{S}_N^Q, \mathcal{S}_M)\) so that it remains valid while being as tight as possible to the bound obtained using the in-distribution non-members \((\mathcal{S}_N^P, \mathcal{S}_M)\).

\begin{algorithm}[t]
\caption{Extrapolation-Calibrated Counterfactual Score Transport}
\label{alg:method}
\begin{algorithmic}[1]
\Require Member data \(\mathcal D_M=(\mathcal{D}_M^{\text{aud}},\mathcal{D}_M^{\text{ref}})\),
OOD non-members \(\mathcal D_N^Q=(D_N^{Q,\text{aud}},D_N^{Q,\text{ref}})\),
pretrained representation \(\phi\),
slab fractions \(\mathcal F\).

\State Compute \(Z_M=\{\phi(x_i^M)\},x_i^M\in \mathcal X_M\) and \(Z_N^Q=\{\phi(x_i^Q)\},x_i^Q\in\mathcal X_N^Q\).

\State \(\tilde{\mathcal S}_N^P\gets\) predictions on \(Z_M^{\mathrm{ref}}\) from the transport learned on \(\mathcal D_N^Q\) (Equation \ref{eq:mean_regression} and \ref{eq:quantile_regression}).

\State \(\hat\mu_{\mathrm{trans}} \gets \hat\mu(\tilde{\mathcal S}_N^{P};\mathcal S_M^{\text{audit}})\).


\For{\(f\in\mathcal F\)} \Comment{\textbf{Shift-aware calibration tasks}}
    \State Take source slab \(\mathcal D_{N,f}^{Q,\mathrm{so}}\) and target slab \(\mathcal D_{N,f}^{Q,\mathrm{ta}}\) from \(\mathcal{D}_{N}^{Q,\text{ref}}\)along the \(Q_Z\rightarrow P_Z\) shift direction.

    \State Compute the representation-space displacement \(\Delta_f \gets\) \Cref{eq:displcement}.

    \State  \(\tilde{\mathcal S}_{f}^{\,\mathrm{pred}}\gets\) predictions on \(Z_{N,f}^{Q,\mathrm{ta}}\) from the transport learned on \(\mathcal D_{N,f}^{Q,\mathrm{so}}\) (Equation \ref{eq:mean_regression} and \ref{eq:quantile_regression}).


    \State  Compute the transport-induced audit error \(e_f  \gets \) \Cref{eq:count-error}.


\EndFor

\State Fit quantile regression on \((\Delta_f, e_f)\): \( (\hat a_\tau,\hat b_\tau) \gets \arg\min_{b\ge0} \sum_{f\in\mathcal F}\rho_\tau(e_f-a-b\Delta_f).\)

\State \(\hat\mu_{\mathrm{corr}} \gets [\hat\mu_{\mathrm{trans}}-\hat r]_+\), \(\hat r \gets [\hat a_\tau+\hat b_\tau \bar d]_+,\; \bar d =\max(d^*,\Delta_{\min}),\;d^*\gets \) \Cref{eq:axis}.

\State \Return Corrected privacy lower bound \(\hat\mu_{\mathrm{corr}}\).
\end{algorithmic}
\end{algorithm}

\section{Methodology}

In \Cref{sec:transport}, we estimate the counterfactual IID non-member score distribution by learning the relationship between representations and MIA scores from the OOD non-members and transporting it to the target covariate distribution. 
In \Cref{sec:diagnostic}, we characterize the two modes of transport error: lack of support overlap and violations of conditional score invariance.
Finally, in \Cref{sec:correction}, we construct controlled calibration tasks within the observed non-member data to estimate transport-induced audit inflation as a function of representation-space displacement, and use this estimate to correct the transported privacy lower bound.
The whole algorithm is in \Cref{alg:method}.

We split member data \(\mathcal D_M\) and non-member data \(\mathcal D_N^Q\) into disjoint subsets: reference data \((\mathcal D_M^{\mathrm{ref}}, \mathcal D_N^{Q,\mathrm{ref}})\) and auditing data \((\mathcal D_M^{\mathrm{aud}}, \mathcal D_N^{Q,\mathrm{aud}})\). The auditing data are reserved for the final audit, while the reference data are used to learn how to mitigate the effect of distribution shift.

\subsection{Counterfactual score transport} \label{sec:transport}

Because directly modeling the MIA score as a function of the high-dimensional input \(X\) is difficult, we introduce a fixed pretrained representation \( \phi:{X}\rightarrow{Z}. \)
Let \( P_Z\) and \(Q_Z \) denote the corresponding representation distributions under \(P_X\) and \(Q_X\).


\begin{assumption}[Conditional score invariance] \label{assumption}
For all \(z\) in the common support of \(P_Z\) and \(Q_Z\), \(F_{S_N^P}(s\mid z)=F_{S_N^Q}(s\mid z)\), where
\( F_{S_N^P}(s\mid z)=\mathbb{P}(S_N\leq s\mid Z=z, A=-1, D=P)\) and analogously for \(Q\).
\(D\) denotes the domain.
\end{assumption}

Equivalently, \(S_N\perp D\mid Z, A=-1.\)
Under this assumption \citep{chernozhukov2013inference} and when \(\text{supp}(P_Z)\subseteq \text{supp}(Q_Z)\), the target IID non-member score distribution can be written as
\begin{equation} \label{eq:invariance}
     \begin{aligned} 
     F_{S_N^P}(s) = \int \mathbb{P}(S_N\le s\mid Z=z,A=-1,D=P) \,dP_{Z}(z) = \int F_{S_N^Q}(s\mid z) \,dP_{Z}(z).
     \end{aligned} 
\end{equation} 
The OOD non-members provide information about the conditional score mechanism \(F_{S_N^Q}(S\mid Z),\)
while the member inputs provide samples from the target covariate distribution \(P_Z.\)
Thus, we estimate the counterfactual IID non-member MIA score distribution \(\tilde{F}_{S_N^P}\) using the OOD non-member data \(\mathcal D_N^{Q}\) and the target reference representations \(P_Z^{\text{ref}}\) with two transport estimators in the following.

\subsubsection{Linear residual transport}
We model the conditional mean of OOD non-member scores \(m(z)=\mathbb{E}[S_N| Z=z, D=Q],\) and estimate \(m(z)\) from  reference OOD non-members \(\mathcal{D}_N^{Q,\text{ref}}\) by squared-loss regression
\begin{equation} \label{eq:mean_regression}
    \hat m(z)=\arg\min\sum_{(z_i^{Q,\text{ref}},s_i^{Q,\text{ref}})\in\mathcal D_{N}^{Q,\text{ref}}}(s_i^{Q,\text{ref}}-m(z_i^{Q,\text{ref}}))^2.
\end{equation}
For an observed OOD audit non-member \((z_i^{Q,\text{aud}},s_i^{Q,\text{aud}})\in\mathcal{D}_N^{Q,\text{aud}}\), define its residual 
\begin{equation}
    \hat{\epsilon}_i^{Q}=s_i^{Q,\text{aud}}-\hat{m}(z_i^{Q,\text{aud}}).
\end{equation}
We then sample \(B\) independent reference members \(z_{i,1}',\dots,z_{i,B}'\sim P_Z^{\text{ref}} \) from the member covariate pool and transport the score to the same residual under the target covariate:
\begin{equation} \label{eq:linear_transport}
    \tilde{s}_{i,b}^{P,lr}=\hat{m}(z_{i,b}')+\hat{\epsilon}_i^{Q},\; b=1,\dots,B.
\end{equation}
This transformation changes the conditional location of the score according
to the target covariate while preserving the residual.
The linear transport assumes an additive score model \(S_N = m(Z)+\epsilon,\) where the residual distribution is invariant across covariates and domains. 
Consequently, linear transport primarily corrects changes in conditional
location and may fail when distribution shift changes conditional variance,
skewness, or tail behavior.

\subsubsection{Conditional quantile transport}
To model the full conditional score distribution, we estimate conditional quantile function \citep{koenker2001quantile}
\begin{equation}
    q_\tau(z) = F_{S_N^Q}^{-1}(\tau\mid Z=z), \qquad \tau\in(0,1). 
\end{equation}
For a grid of quantile levels \(\mathcal T\), the quantile models are trained by minimizing pinball loss on \(\mathcal{D}_N^{Q,\text{ref}}\),
\begin{equation} \label{eq:quantile_regression}
    \hat q_\tau(z) = \arg\min \sum_{(z_i^{Q,\text{ref}},s_i^{Q,\text{ref}})\in\mathcal D_N^{Q,\text{ref}}} \rho_\tau \left( s_i^{Q,\text{ref}}-q(z_i^{Q,\text{ref}}) \right),  
\end{equation}
where \( \rho_\tau(u) = u\left(\tau-\mathbf 1\{u<0\}\right). \)

For each audit OOD non-member \((z_i^{Q,\text{aud}},s_i^{Q,\text{aud}})\in\mathcal{D}_N^{Q,\text{aud}}\), we estimate its conditional quantile,
\begin{equation}
    \hat\tau_i = \hat F_{S_N^Q}(s_i^{Q,\text{aud}}\mid z_i^{Q,\text{aud}}). 
\end{equation}
We again sample \(B\) independent members \(z_{i,1}',\dots,z_{i,B}'\sim P_Z^{\text{ref}} \) from the member covariate pool and transport the score to the same conditional percentile under the target covariate: 
\begin{equation} \label{eq:quantile_transport}
    \tilde s_{i,b}^{P,qr} = \hat q_{\hat\tau_i}(z_{i,b}'), \; b=1,\dots,B. 
\end{equation}
The transported sample \(\tilde {\mathcal S}_N^{P,lr} \) and \(\tilde {\mathcal S}_N^{P,qr}\) is obtained by averaging the replicate empirical CDFs, which reduces sample variance without averaging individual records' scores.
Specifically,
\begin{equation}
    \tilde F_{S_{N,b}^P}(s)=\frac{1}{n_Q^{\text{aud}}}\sum_{i=1}^{n_Q^{\text{aud}}} 1\{\tilde{s}_{i,b}^P\leq s\}, \; \tilde F_{S_N^P}(s)=\frac{1}{B}\sum_{b=1}^B \tilde F_{S_{N,b}^P}(s).
\end{equation}
The final \(n_Q^{\mathrm{aud}}\) transported samples  \(\tilde {\mathcal S}_N^{P,lr} \) and \(\tilde {\mathcal S}_N^{P,qr}\) are obtained from \(\tilde F_{S_N^P}\) respectively.

\subsection{When is nominal transport unreliable?} \label{sec:diagnostic}
Recall that identifying \(F_{S_N^P}\) via \Cref{eq:invariance} requires two conditions:
(a) the conditional invariance assumption (Assumption \ref{assumption}), and
(b) sufficient support overlap, \(\operatorname{supp}(P_Z)\subseteq\operatorname{supp}(Q_Z)\).
Either condition may fail. 
The representation \(Z\) must retain sufficient information about the target-model score for conditional invariance to hold, while support coverage is required to avoid extrapolating the score model beyond the observed source covariates. 
We distinguish the two failure modes:
\begin{enumerate}[nosep,leftmargin=2em]
   \item \textbf{Lack of support overlap.}
    Suppose Assumption \ref{assumption} holds on the overlap region, but
    \(\operatorname{supp}(P_Z)\nsubseteq\operatorname{supp}(Q_Z)\).
    Let
    \(
        \mathcal O=\operatorname{supp}(P_Z)\cap\operatorname{supp}(Q_Z),
        \mathcal O^c=\operatorname{supp}(P_Z)\setminus\mathcal O.
    \)
    Then
    \begin{equation}\label{eq:overlap_intergral}
        F_{S_N^P}(s)
        =
        \int_{\mathcal O} F_{S_N^Q}(s\mid z)\,dP_{Z}(z)
        +
        \int_{\mathcal O^c} F_{S_N^P}(s\mid z)\,dP_{Z}(z).
    \end{equation}
    The second term is not identified from the OOD non-members and requires extrapolation.

    \item \textbf{Violation of conditional invariance.}
    Even within the overlap region, Assumption \ref{assumption} may fail, such that
    \begin{equation}
        F_{S_N^P}(S\mid Z=z)\neq F_{S_N^Q}(S\mid Z=z).
    \end{equation}
    In this case, transporting the conditional score distribution from \(Q_Z\) to \(P_Z\) can be biased even without extrapolation.
\end{enumerate}

\textbf{Diagnostic assessment of conditional transportability.}
In order to assess whether the score distribution learned from OOD non-members transfers to the target domain, we define the domain propensity score \( \pi(z)=\mathbb{P}(D= Q\mid Z=z,\; A=-1),\)
where values near \(0\) and \(1\) indicate IID-like and OOD-like representations, respectively. 
We partition the data into ten propensity bins and, for each bin \(I\), estimate the OOD non-member score CDF: \(F_{Q,I}(s)=\mathbb{P}(S\leq s\mid D=Q,\pi(z)\in I).\)
For an observation with score \(S\), we compute its PIT (probability integral transform) value \(U=F_{Q,I}(S)\). 
If the conditional score distribution is transportable from \(Q\) to \(P\), the PIT values for IID non-members should be approximately uniform, and their ECDF should follow the diagonal.

As shown in \Cref{fig:pit-domainnet} for a distribution-shift benchmark DomainNet \citep{peng2019moment} and across more datasets in \Cref{fig:pit_cifar10c,fig:pit_additional}, the IID non-member curve is either close to the diagonal or lies below it. 
The latter indicates larger IID non-member MIA scores than predicted from OOD non-members, implying that nominal transport tends to under-correct the shift and can yield an anti-conservative privacy lower bound. 
This motivates our calibration tasks to correct the audit error.



\begin{figure}[htbp] 
    \begin{center}          
    \centering
        \includegraphics[width=\textwidth]{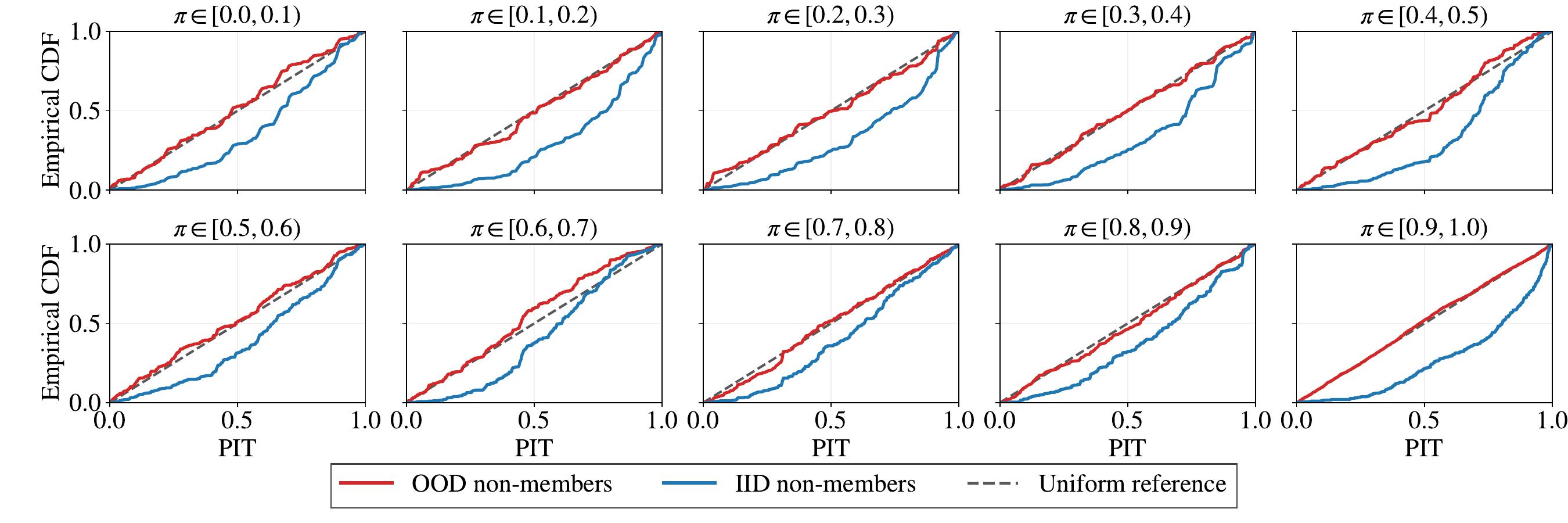}     
    \caption{Diagnostic assessment of conditional transportability.
    The dashed diagonal denotes the uniform reference expected under conditional transportability. 
    The OOD non-members are used to construct the reference score distributions, while IID non-member curves below the diagonal indicate larger MIA scores than predicted by the OOD reference distributions.}
    \label{fig:pit-domainnet}
    \end{center}
    \vskip -0.2in
\end{figure}




\subsection{Shift-aware conservative correction} \label{sec:correction}

We estimate the unknown audit error induced from the transport using calibration tasks. 
We order the OOD data along the \(Q_Z\to P_Z\) shift direction, transport scores from a source-side subset to a target-like subset with known scores, and measure the resulting audit error. 
Repeating this at different transport distances allows us to learn how the resulting audit error changes with representation-space displacement, estimate the error of the actual \(Q_Z\to P_Z\) transport distance and correct its transported privacy lower bound.

\textbf{Source-to-target displacement.}
We reduce the source-to-target mismatch to a scalar. 
Let \(\widehat\Sigma_Q\) be a Ledoit--Wolf shrinkage estimate of the reference covariance \(Z_N^{Q,\text{ref}}\) and \(W = \widehat\Sigma^{-1/2}_Q\) the corresponding whitening map, so that one unit is one reference standard deviation. 
With \(\bar z_{N}^Q\) and \(\bar z_{M}^P\) the reference OOD non-member and member means, define the \emph{shift axis} and the induced displacements
\begin{equation}
  \label{eq:axis}
  \mathcal O \;=\; \frac{W(\bar z_{M}^P - \bar z_{N}^Q)}
                {\|W(\bar z_{M}^P - \bar z_{N}^Q)\|},
  \qquad
  d( z) \;=\; \bigl\langle W (z - \bar z_{N}^Q),\; \mathcal O \bigr\rangle,
  \qquad
  d^* \;=\; \bigl\|W(\bar z_{M}^P - \bar z_{N}^Q)\bigr\| .
\end{equation}
\(\mathcal O\) means the shift direction from non-members to members.
Each non-member sample is assigned a signed displacement \(d\), centered to have mean zero, while \(d^*\) denotes the mean displacement of the member pool, representing the distance over which the transport is required to extrapolate.

\textbf{Controlled slab extrapolation.}
\(d(z)\) measures a sample's position along the \(Q_Z\to P_Z\) shift direction, with larger \(d(z)\) indicating a position closer to the target side. 
We therefore sort \(\mathcal{D}_N^{Q,\mathrm{ref}}\) by \(d(z)\). 
For each fraction \(f\in(0,\tfrac12]\), we take the bottom \(f\)-fraction as a source slab \(\mathcal{D}^{Q,\mathrm{so}}_{N,f}\) and the top \(f\)-fraction as a target slab \(\mathcal{D}^{Q,\mathrm{ta}}_{N,f}\). 
This creates a controlled transport task from a source-like region to a more target-like region along the same direction as the actual \(Q_Z\to P_Z\) transport. 
We quantify the distance of this task by the difference between the mean positions of the two slabs,
\begin{equation}\label{eq:displcement}
  \Delta_f \;=\; \frac{1}{|\mathcal{D}^{\mathrm{Q,ta}}_{N,f}|}\sum_{z_i\in\mathcal{D}^{\mathrm{Q,ta}}_{N,f}}d(z_i)
               - \frac{1}{|\mathcal{D}^{\mathrm{Q,so}}_{N,f}|}\sum_{z_j\in\mathcal{D}^{\mathrm{Q,so}}_{N,f}}d(z_j).
\end{equation}
We refit the same residual and quantile transport (equation \ref{eq:mean_regression} and \ref{eq:quantile_regression}) on $\mathcal{D}^{\mathrm{Q,so}}_{N,f}$ alone, and use it to produce predicted score samples at the covariates of $Z^{\mathrm{Q,ta}}_{N,f}$.
Because \(Z^{\mathrm{Q,ta}}_{N,f}\) is real reference data, its \emph{observed} scores \(\mathcal S^{\mathrm{Q,ta}}_{N,f}\) are known, so we could measure the transport-induced audit error between the predicted and observed samples measures when made to extrapolate a distance \(\Delta_f\).

Let \(\mathcal S_f^{\text{obs}}\) denote the observed score samples and \(\tilde{\mathcal S}_f^{\text{pred}}\) the predicted score samples on \(Z^{\mathrm{Q,ta}}_{N,f}\) learned from \(\mathcal{D}^{\mathrm{Q,so}}_{N,f}\) at separation \( \Delta_f\).
We therefore define the transport-induced audit error:
\begin{equation}
  \label{eq:count-error}
  e_f \;=\; \hat{\mu}(\tilde{\mathcal S}^{\text{pred}}; \mathcal S_M^{\text{ref}}) \;-\; \hat{\mu}(\mathcal S_f^{\text{obs}};\mathcal S_M^{\text{ref}}) .
\end{equation}
\textbf{Fit the error-growth envelope.}
By varying the fraction \(f\), we obtain a collection of pairs \(\{(\Delta_f, e_f)\}\).
The slab separations \(\Delta_f\) are bounded by the reference pool's own spread \([\Delta_{\min},\Delta_{\max}]\), whereas the transport must travel \(d^*\). 
We fit a growth law \(g(\Delta)\) to the pairs \(\{(\Delta_f, e_f)\}\) and read it off at \(d^*\):
\begin{equation}
  \label{eq:growth}
  \hat r \;=\; [\hat g(\bar d)]_+, \qquad \bar d=\max(d^*,\Delta_{\min}), \qquad  e_f= g(\Delta_f),
\end{equation}
where \([c]_+=\max(c,0)\).
In order to obtain an upper estimate of how large the error can become at a given displacement, we use \(\tau=0.95\) to fit an upper conditional quantile of the observed calibration errors.
Let \(\hat\mu_{\text{trans}}=\hat{\mu}(\tilde{\mathcal S}_N^{P};\mathcal S_M^{\text{audit}})\) be the bound the audit would report on the transported samples. 
We define 
\begin{equation}
    \hat{\mu}_{\text{corr}}=[\hat\mu_{\text{trans}}-\hat r]_+,\qquad  e_{\mathrm{real}} = \left[ \hat\mu_{\text{trans}} - \hat\mu_{\text{oracle}} \right]_+. 
\end{equation}
Here \(\hat\mu_{\text{oracle}}=\hat\mu(\mathcal S_N^P;\mathcal S_M)\) is the privacy lower-bound estimator obtained by auditing member scores \(\mathcal S_M\) against the unobserved oracle IID non-member scores \(\mathcal S_N^P\).

For the target representation-space displacement \(d^*\), if the real transport-induced audit error is upper-bounded by the error-growth envelope calibrated from source-domain slab extrapolations \( e_{\mathrm{real}} \le \hat r\), then we have the corrected lower bound being valid \(\hat \mu_{\text{corr}}\leq \hat\mu_{\text{oracle}}\).
\begin{assumption}\label{ass:axis_sufficiency}
    Recall that \(e_f\) is the transport-induced audit error of a calibration-type extrapolation task with displacement \(\Delta_f\).
    Define its conditional upper quantile 
    \begin{equation}
        q_\tau(\Delta_f)=\inf\{t:\mathbb{P}(e_f\le t\mid \xi=\Delta_f)\ge \tau\}.
    \end{equation}
    Now assume 
    \begin{equation}
        \mathbb{P}(e_{\text{real}}\le q_\tau(d^*)\mid d^*)\ge \tau.
    \end{equation}
\end{assumption}
Assumption \ref{ass:axis_sufficiency} means that at the same displacement, the real \(Q_Z\to P_Z\) audit error is no worse, in its upper tail, than the error distribution represented by the controlled within-\(Q_Z\) calibration tasks.
A sufficient condition is one-sided stochastic domination: \(F_{e_{\text{real}}}(t\mid d^*)\ge F_{e_f|\Delta=d^*}(t)\), \(\forall t\), i.e., at the same displacement, the \(Q_Z\to P_Z\) error is stochastically no larger than the within-\(Q_Z\) error.
\begin{assumption} \label{ass:monotone}
    The upper error quantile grows monotonically with the displacement:
    \begin{equation}
      \text{If }   \Delta_1\le\Delta_2, \; \text{then } q_{\tau}(\Delta_1)\le q_{\tau}(\Delta_2).
    \end{equation}
\end{assumption}

\begin{assumption}\label{ass:uniform_coverage}
Let \(E_\tau(\Delta)\) be a data-dependent upper confidence envelope constructed from the calibration data. 
Assume that
    \begin{equation}
        \mathbb{P}(E_{\tau}(\Delta)\ge q_{\tau}(\Delta),\; \forall \Delta \in[\Delta_{\min},\Delta_{\max}])\ge 1-\eta.
    \end{equation}
\end{assumption}
\begin{theorem} \label{thm:conservative}
    Assume Assumptions \ref{ass:axis_sufficiency},\ref{ass:monotone},\ref{ass:uniform_coverage} and \(d^*\in[\Delta_{\min},\Delta_{\max}]\).
     We have 
    \begin{equation}
         \mathbb{P}(\hat\mu_{\text{corr}}\le \hat\mu_{\text{oracle}}) \ge \tau -\eta.
    \end{equation}
\end{theorem}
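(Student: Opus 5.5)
The plan is to reduce the event \(\{\hat\mu_{\text{corr}}\le\hat\mu_{\text{oracle}}\}\) to the intersection of two events: one controlled by Assumption \ref{ass:axis_sufficiency} and one by Assumption \ref{ass:uniform_coverage}. A union bound then gives \(\tau-\eta\).

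We identify the correction \(\hat r\) with the envelope evaluated at the target displacement. Since \(d^*\in[\Delta_{\min},\Delta_{\max}]\), we have \(\bar d=\max(d^*,\Delta_{\min})=d^*\). We take the fitted growth law to be the data-dependent envelope, \(\hat g=E_\tau\), so that \(\hat r=[E_\tau(d^*)]_+\ge E_\tau(d^*)\).

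Next we establish the deterministic step. As noted just before the assumptions, \(e_{\text{real}}\le\hat r\) implies validity. Indeed, if \(\hat\mu_{\text{trans}}\le\hat\mu_{\text{oracle}}\), then \(\hat\mu_{\text{corr}}\le\hat\mu_{\text{trans}}\le\hat\mu_{\text{oracle}}\). Otherwise \(e_{\text{real}}=\hat\mu_{\text{trans}}-\hat\mu_{\text{oracle}}\le\hat r\), which gives \(\hat\mu_{\text{trans}}-\hat r\le\hat\mu_{\text{oracle}}\). Since \(\hat\mu_{\text{oracle}}\ge 0\), taking the positive part preserves the inequality. It therefore suffices to lower bound \(\mathbb{P}(e_{\text{real}}\le\hat r)\).

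Define the events \(A=\{e_{\text{real}}\le q_\tau(d^*)\}\) and \(B=\{E_\tau(\Delta)\ge q_\tau(\Delta)\ \forall\Delta\in[\Delta_{\min},\Delta_{\max}]\}\). On \(A\cap B\) we have
\[
e_{\text{real}}\le q_\tau(d^*)\le E_\tau(d^*)\le\hat r,
\]
where the middle inequality uses \(d^*\in[\Delta_{\min},\Delta_{\max}]\). If \(\bar d\) were instead \(\Delta_{\min}>d^*\), Assumption \ref{ass:monotone} would give \(q_\tau(d^*)\le q_\tau(\bar d)\le E_\tau(\bar d)\). This is where monotonicity enters; it also covers a monotone or piecewise envelope evaluated at a clipped point. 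By Assumption \ref{ass:axis_sufficiency}, \(\mathbb{P}(A)\ge\tau\), and by Assumption \ref{ass:uniform_coverage}, \(\mathbb{P}(B^c)\le\eta\). Hence
\[
\mathbb{P}(A\cap B)\ge\mathbb{P}(A)-\mathbb{P}(B^c)\ge\tau-\eta.
\]

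The main obstacle is the identification \(\hat r\ge E_\tau(d^*)\). The algorithm actually uses a linear quantile-regression fit \(\hat a_\tau+\hat b_\tau\bar d\), which is not a priori an upper confidence envelope. I would handle this by defining \(E_\tau\) in Assumption \ref{ass:uniform_coverage} to be exactly the reported envelope \(\hat a_\tau+\hat b_\tau\Delta\). The constraint \(\hat b_\tau\ge0\) makes this envelope monotone, consistent with Assumption \ref{ass:monotone}. A secondary subtlety is that the conditioning on \(d^*\) in Assumption \ref{ass:axis_sufficiency} must be integrated out. Because the bound \(\tau\) holds conditionally for every \(d^*\), it also holds unconditionally by the tower property.
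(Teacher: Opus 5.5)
Your proposal is correct and follows the paper's proof essentially step for step. You intersect the coverage event of Assumption~\ref{ass:uniform_coverage} with \(\{e_{\text{real}}\le q_\tau(d^*)\}\) from Assumption~\ref{ass:axis_sufficiency}, chain \(e_{\text{real}}\le q_\tau(d^*)\le E_\tau(\bar d)\le\hat r\) (with Assumption~\ref{ass:monotone} needed only if \(\bar d\ne d^*\)), apply the union bound, and then deduce \(\hat\mu_{\text{corr}}\le\hat\mu_{\text{oracle}}\) deterministically from \(e_{\text{real}}\le\hat r\); your explicit handling of the positive part in \(\hat r\), the case split, and the tower-property remark fill in details the paper leaves implicit without changing the argument.
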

\Cref{thm:conservative} compares the corrected estimate to  \(\hat{\mu}_{\mathrm{oracle}}\), obtained using the unavailable IID
non-member scores. 
It does not directly compare \(\hat{\mu}_{\mathrm{corr}}\) with the true \(\mu\)-GDP parameter of the training algorithm. 
Thus, the theorem characterizes the additional error introduced by distribution shift and score transport, relative to the audit that would have been obtained if IID non-members were available. 
We defer the proof to \Cref{app:proof} in the Appendix.




\section{Experiments} \label{sec:exps}

\textbf{Auditing Setup.}
We use a loss-based MIA and report a conservative empirical lower bound, $\mu_L$, on privacy leakage.
To account for finite-sample uncertainty in the data-dependent correction, we use 200 bootstrap resamples of the reference dataset.
We allocate the error probability of \(0.025\) to the privacy audit and \(0.025\) to this bootstrap uncertainty quantification.
We repeat each scenario for 5 seeds and plot the mean and standard deviation.
The configuration details can be found in \Cref{app:exp_conf}.

\textbf{Datasets.}
For \textbf{image classification}, we train target models on CIFAR-10 \citep{krizhevsky2009learning} and consider two forms of distribution shift.  
First, we use the \emph{brightness} corruption from CIFAR-10-C \citep{hendrycks2019benchmarking} at all five severity levels, from mild to severe. 
Second, we use CINIC-10 \citep{darlow2018cinic} to introduce a natural source-domain shift, which combining CIFAR-10 (22\%) with downsampled ImageNet images (78\%).
We additionally evaluate on DomainNet \citep{peng2019moment}, a more challenging domain-shift benchmark containing the same 345 object classes across multiple visual domains. 
We train the target model on the \emph{Real} domain.
Images from the \emph{Painting} domain serve as the OOD non-members.
For \textbf{language modeling}, we fine-tune a GPT-2 model on arXiv papers from February 2023 and use arXiv papers from February 2026 as OOD non-members, inducing a
natural temporal distribution shift.

\textbf{Baselines.}
We compare our method against four baselines:
(1) \textbf{No correction}, which directly applies one-run \(f\)-DP auditing to the available OOD non-members without accounting for distribution shift \citep{mahloujifar2025auditing};
(2) \textbf{Oracle IID Non-member}, which applies the same one-run \(f\)-DP audit using true IID non-members and serves as the oracle reference;
(3) \textbf{Global correction} \citep{cebere2026privacy}, which adjusts the audit using a dataset-level estimate of the distribution shift between members and non-members; and
(4) \textbf{Conditional correction} \citep{cebere2026privacy}, which uses propensity scores to account for the shift conditionally on observed covariates.

\begin{figure}[htbp] 
    \begin{center}          
    \centering
        \includegraphics[width=0.9\textwidth]{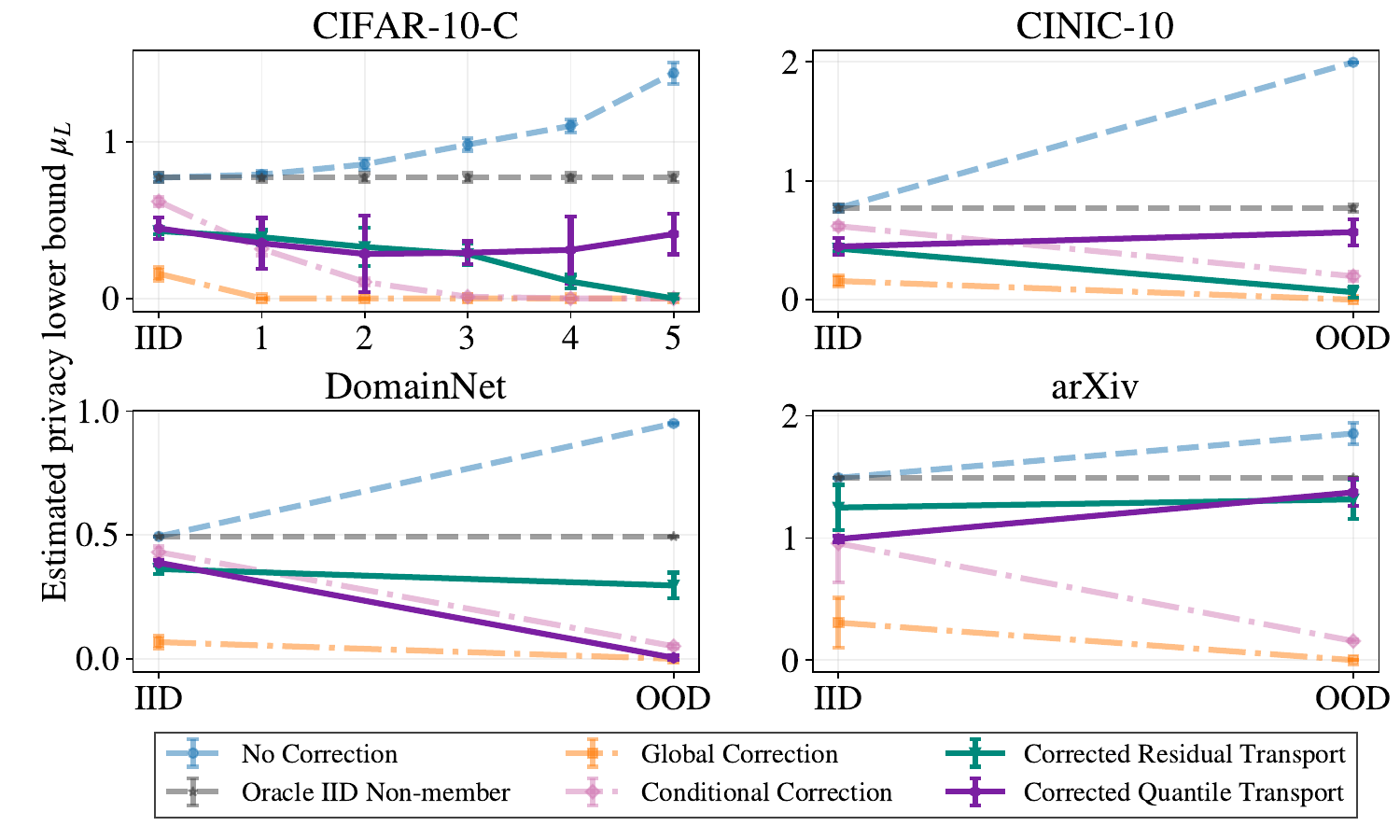}     
    \caption{Comparison of estimated lower bounds across distribution shifts and datasets. At least one of our two transport variants
remains stable and produce bounds that stay close to the oracle.}
    \label{fig:estimated_mu}
    \end{center}
    \vskip -0.2in
\end{figure}

\textbf{Results.}
As shown in \Cref{fig:estimated_mu}, we take the audit with IID non-members as the oracle. 
Without correction, the estimated \(\mu\) increases substantially under distribution shift, indicating that the audit is confounded by distribution shift rather than reflecting privacy leakage alone. 
The global and conditional correction baselines \citep{cebere2026privacy} become increasingly conservative as the shift grows, often yielding loose or near-zero bounds. 
In contrast, at least one of our two transport variants remains stable across datasets and shift scenarios and produces bounds that stay close to the oracle IID audit.
Under the strongest CIFAR-10-C corruption (level 5), the uncorrected lower bound increases from \(\mu=0.76\) in the IID setting to \(1.51\), while corrected quantile transport yields \(0.42\). 
On CINIC-10, the uncorrected bound similarly increases to \(2.00\) under distribution shift, whereas corrected quantile transport remains at \(0.59\). 
On DomainNet, corrected residual transport yields \(0.31\) in the OOD setting, compared with an IID bound of \(0.49\). 
Finally, on arXiv, both corrected residual and quantile transport remain close to the IID bound of \(1.49\), yielding \(1.37\) and \(1.39\), respectively.

We further visualize the fitted relationship between representation-space displacement \({\Delta}_f\) and the transport-induced audit error \( e_f\) in \Cref{fig:quantile_fit}, together with the target displacement and its predicted and realized errors. 
Across all evaluated settings, the predicted error upper-bounds the realized error, supporting the conservativeness of the resulting corrected privacy lower bounds.

We provide additional ablations in \Cref{app:ablation} on the conservative correction, to assess its role in preventing overestimation after score transport, and on the number of fractions used in calibration, to demonstrate the robustness of our method to this hyperparameter.

\begin{figure}[t] 
    \begin{center}          
    \centering
        \includegraphics[width=\textwidth]{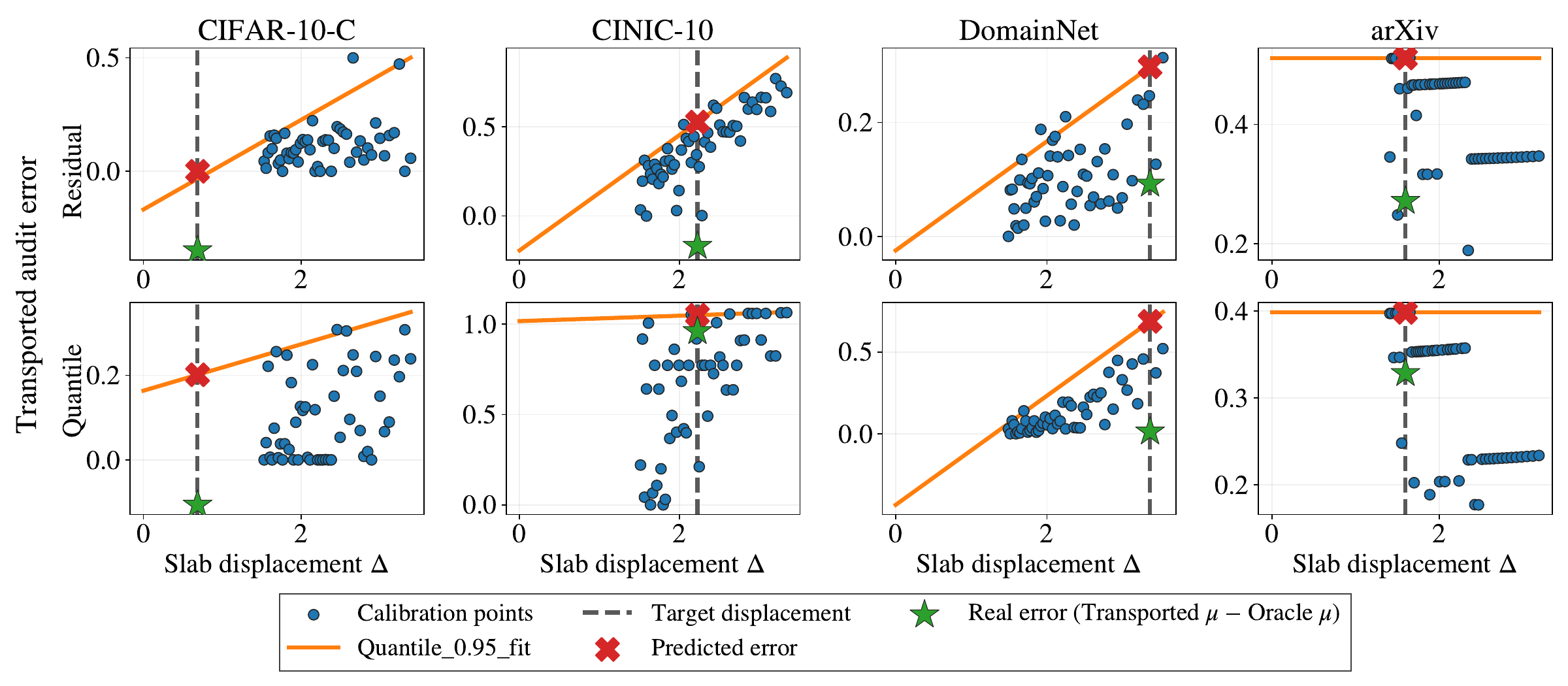}     
    \caption{{Calibration of audit-error growth versus representation-space displacement across datasets (severity level = 1 on CIFAR-10-C).} The fitted upper envelope predicts the target audit error (red cross) at the target displacement and conservatively exceeds the realized audit error (green star).}
    \label{fig:quantile_fit}
    \end{center}
    \vskip -0.2in
\end{figure}

\section{Discussion}
We study observational privacy auditing in settings where the auditor cannot retrain the target model. 
We estimate the counterfactual IID non-member score distribution by transporting the relationship between pretrained representations and MIA scores learned from OOD non-members. 
To mitigate the transport-induced audit error, we further construct calibration tasks to estimate the error and correct the resulting audit. 
Across multiple distribution shifts, at least one of our two transport variants provides tighter privacy lower bounds than existing baselines. 
In practice, we therefore report the larger corrected lower bound produced by the two variants.
While our method does not provide formal validity guarantees under realistically verifiable assumptions, establishing such guarantees under distribution shift remains challenging for existing observational auditing approaches as well.

\newpage
\section*{Generative AI Use}
We used large language models (LLMs), including OpenAI’s ChatGPT and Anthropic’s Claude, to assist with language polishing, literature discovery, code development, and figure generation. 
Specifically, LLMs were used to improve grammar, phrasing, and notation consistency; identify potentially relevant related work; and assist with writing and refactoring code for data processing, experiment execution, and visualization. 
All suggested references were checked against the original sources, and all AI-assisted code was reviewed, tested, and modified by the authors to ensure correctness and reproducibility. 
The research ideas, experimental design, hypotheses, and scientific claims of the paper were developed and written by the authors. 
LLMs were used only as supporting tools rather than to generate the core scientific contributions. 
We reviewed and finalized all AI-assisted content and take full responsibility for the final submission.

\section*{Reproducibility Statement}
We provide pseudocode for the complete method and describe the score transport, calibration, and auditing procedures in the main text. 
The appendix contains dataset preprocessing details, model and training configurations, MIA scores, representation models, hyperparameters, bootstrap settings, and additional ablations and diagnostics. 
All datasets used in our experiments are publicly available. 
We plan to release the code upon publication.

\section*{Acknowledgments}
This work was supported by the Research Council of Finland (Finnish Center for Artificial Intelligence, FCAI, Grant 356499 and Grant 359111), UKRI Turing AI World-Leading Researcher Fellowship, EP/W002973/1, the Strategic Research Council at the Research Council of Finland (Grant 358247) as well as the European Union (Project 101070617).
Views and opinions expressed are however those of the author(s) only and do not necessarily reflect those of the European Union or the European Commission. 
Neither the European Union nor the granting authority can be held responsible for them. 
This work has been performed using resources provided by the Aalto Science-IT Project from Computer Science IT.
The authors acknowledge the research environment provided by ELLIS Institute Finland.

\bibliography{iclr2027_conference}
\bibliographystyle{iclr2027_conference}

\newpage
\appendix

\section{Theoretical results} \label{app:proof}

\paragraph{Proof of Theorem~\ref{thm:conservative}.}
Let \(\mathcal{E}= \left\{E_\tau(\Delta) \ge q_\tau(\Delta),\; \ \forall \Delta\in[\Delta_{\min},\Delta_{\max}]\right\}.\)
By Assumption~\ref{ass:uniform_coverage},
\begin{equation}
    \mathbb{P}(\mathcal{E}) \ge 1-\eta.
\end{equation}
Recall that \( \bar d = \max(d^\ast,\Delta_{\min}), \hat r = E_\tau(\bar d).\)
Since $d^\ast\in[\Delta_{\min},\Delta_{\max}]$, we have
$\bar d=d^\ast$. More generally, by the monotonicity in Assumption~\ref{ass:monotone},
\begin{equation}
    q_\tau(d^\ast)\le q_\tau(\bar d).
\end{equation}
On the event \(\mathcal{E}\),
\begin{equation}
    q_\tau(d^\ast)\le q_\tau(\bar d)\le E_\tau(\bar d)= \hat r.
\end{equation}
By Assumption~\ref{ass:axis_sufficiency},
\begin{equation}
    \mathbb{P}\left(e_{\mathrm{real}}\le q_\tau(d^\ast)\right)\ge \tau.
\end{equation}
Therefore, by the union bound,
\begin{equation}
    \mathbb{P}\left(e_{\mathrm{real}}\le \hat r\right)\ge\mathbb{P}\left(\{e_{\mathrm{real}}\le q_\tau(d^\ast)\}\cap \mathcal{E}\right)\ge \tau-\eta.
\end{equation}

It remains to show that \(e_{\mathrm{real}}\le\hat r\) implies \(\hat\mu_{\mathrm{corr}}\le\hat\mu_{\mathrm{oracle}}\).
Recall that
\begin{equation}
    e_{\mathrm{real}}=[\hat\mu_{\mathrm{trans}}-\hat\mu_{\mathrm{oracle}}]_+, \qquad  \hat\mu_{\mathrm{corr}}=[\hat\mu_{\mathrm{trans}}-\hat r]_+.
\end{equation}
On the event \(e_{\mathrm{real}}\le\hat r\),
\begin{equation}
    \hat\mu_{\mathrm{corr}}=[\hat\mu_{\mathrm{trans}}-\hat r]_+\le[\hat\mu_{\mathrm{trans}}-e_{\mathrm{real}}]_+\le\hat\mu_{\mathrm{oracle}},
\end{equation}
where the last inequality follows directly from the definition of
\(e_{\mathrm{real}}\).

Consequently,
\begin{equation}
    \mathbb{P}\left(\hat\mu_{\mathrm{corr}}\le\hat\mu_{\mathrm{oracle}}\right)\ge\mathbb{P}\left(e_{\mathrm{real}}\le\hat r\right)\ge\tau-\eta.
\end{equation}
\hfill$\square$

\begin{corollary}
Under Assumption~\ref{ass:monotone}, if \(d^*<\Delta_{\min}\), then \(    q_\tau(d^*) \le q_\tau(\Delta_{\min}).\)
Therefore, evaluating the error envelope at \( \bar d=\max(d^*,\Delta_{\min})=\Delta_{\min}\) remains conservative for the target displacement \(d^*\) when \(d^*<\Delta_{\min}\).
In contrast, when \(d^*>\Delta_{\max}\), the target transport lies beyond the range of displacements represented by the calibration tasks.
Assumption~\ref{ass:monotone} only implies \(q_\tau(d^*) \ge q_\tau(\Delta_{\max}),\) 
so the error at \(d^*\) cannot in general be upper-bounded by the calibrated envelope without an additional assumption on how \(q_\tau(\Delta)\) extrapolates beyond \(\Delta_{\max}\).
Therefore, we do not claim the conservativeness guarantee of Theorem~\ref{thm:conservative} when \(d^*>\Delta_{\max}\).
\end{corollary}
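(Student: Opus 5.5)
The corollary has two parts, and I would treat them separately. Both rest only on Assumption~\ref{ass:monotone} and on the structure of the proof of Theorem~\ref{thm:conservative}.

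\emph{Case $d^*<\Delta_{\min}$.} Here $\bar d=\max(d^*,\Delta_{\min})=\Delta_{\min}$. Since $d^*\le\Delta_{\min}$, Assumption~\ref{ass:monotone} with $\Delta_1=d^*$ and $\Delta_2=\Delta_{\min}$ gives $q_\tau(d^*)\le q_\tau(\Delta_{\min})$ directly. To justify ``remains conservative'', I would re-run the chain from the proof of Theorem~\ref{thm:conservative}. On the event $\mathcal E$ of Assumption~\ref{ass:uniform_coverage}, the point $\Delta_{\min}$ lies in $[\Delta_{\min},\Delta_{\max}]$, so
\[
q_\tau(d^*)\le q_\tau(\Delta_{\min})\le E_\tau(\Delta_{\min})=\hat r .
\]
Combining this with Assumption~\ref{ass:axis_sufficiency} and the same union-bound step gives $\mathbb P(e_{\mathrm{real}}\le\hat r)\ge\tau-\eta$. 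The final implication $e_{\mathrm{real}}\le\hat r\Rightarrow\hat\mu_{\mathrm{corr}}\le\hat\mu_{\mathrm{oracle}}$ is unchanged. The one point to check is that Assumption~\ref{ass:monotone} is stated for all displacements, including those below $\Delta_{\min}$; I would read it that way. By contrast, Assumption~\ref{ass:uniform_coverage} is needed only at $\Delta_{\min}$, which is inside the calibrated range.

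\emph{Case $d^*>\Delta_{\max}$.} Applying Assumption~\ref{ass:monotone} with $\Delta_1=\Delta_{\max}\le\Delta_2=d^*$ gives only $q_\tau(\Delta_{\max})\le q_\tau(d^*)$. This is a lower bound on the target quantile, so it points the wrong way for conservativeness.

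To show that no upper bound follows in general, I would build a counterexample. Take any admissible $q_\tau$ on $[\Delta_{\min},\Delta_{\max}]$ and any envelope $E_\tau$ satisfying Assumption~\ref{ass:uniform_coverage}. Then extend $q_\tau$ beyond $\Delta_{\max}$ as a nondecreasing function that jumps to an arbitrarily large value $M$ at $d^*$, for instance $q_\tau(\Delta)=M$ for $\Delta>\Delta_{\max}$. Finally, let $e_{\mathrm{real}}$ have upper $\tau$-quantile $M$, consistent with Assumption~\ref{ass:axis_sufficiency}.

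All three assumptions still hold, because Assumption~\ref{ass:uniform_coverage} constrains $E_\tau$ only on $[\Delta_{\min},\Delta_{\max}]$. However, $\hat r$ is computed solely from calibration data and therefore does not depend on $M$. Choosing $M>\hat r$ forces $e_{\mathrm{real}}>\hat r$ with probability at least $1-\tau$, so the conclusion of Theorem~\ref{thm:conservative} can fail. This justifies withholding the guarantee without an additional extrapolation assumption, such as a Lipschitz or linear-growth bound on $q_\tau$ beyond $\Delta_{\max}$.

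The main subtlety, though mild, is in this counterexample. It requires checking that the modified $q_\tau$ and the law of $e_{\mathrm{real}}$ can coexist with the calibration-task distribution on $[\Delta_{\min},\Delta_{\max}]$. Since the assumptions never couple values of $q_\tau$ outside that interval to the data, this check should go through.
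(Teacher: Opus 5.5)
Your first case is the paper's argument. The proof of Theorem~\ref{thm:conservative} already contains the step \(q_\tau(d^*)\le q_\tau(\bar d)\) from Assumption~\ref{ass:monotone}. The rest of the chain needs only \(\bar d=\Delta_{\min}\) to lie in the interval covered by Assumption~\ref{ass:uniform_coverage}, exactly as you say. Your caveat that Assumptions~\ref{ass:axis_sufficiency} and~\ref{ass:monotone} must refer to the hypothetical quantile \(q_\tau(d^*)\) below \(\Delta_{\min}\) is implicit in the paper too.

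For \(d^*>\Delta_{\max}\), the paper only observes that monotonicity gives \(q_\tau(d^*)\ge q_\tau(\Delta_{\max})\), and then declines to claim the guarantee. Your counterexample goes further and shows the guarantee can actually fail, which is a stronger justification. Three small repairs make it airtight.

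First, giving \(e_{\mathrm{real}}\) an upper \(\tau\)-quantile \(M>\hat r\) only yields \(\mathbb P(e_{\mathrm{real}}\le\hat r)<\tau\). That does not contradict the bound \(\tau-\eta\). Instead, let \(e_{\mathrm{real}}\) be degenerate at \(M\), which is still consistent with Assumption~\ref{ass:axis_sufficiency}, and choose \(M\) so that \(\mathbb P(\hat r\ge M)<\tau-\eta\).

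Second, you need the converse of the paper's final implication to turn this into failure for \(\hat\mu_{\mathrm{corr}}\). If \(e_{\mathrm{real}}>\hat r\ge 0\), then \(e_{\mathrm{real}}=\hat\mu_{\mathrm{trans}}-\hat\mu_{\mathrm{oracle}}\). Hence \(\hat\mu_{\mathrm{corr}}\ge\hat\mu_{\mathrm{trans}}-\hat r>\hat\mu_{\mathrm{oracle}}\), so the events \(\{e_{\mathrm{real}}\le\hat r\}\) and \(\{\hat\mu_{\mathrm{corr}}\le\hat\mu_{\mathrm{oracle}}\}\) coincide.

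Third, \(e_{\mathrm{real}}\le\hat\mu_{\mathrm{trans}}\) is bounded by the largest attainable one-run bound \(\mu_L(c,c,n)\), so \(M\) cannot be arbitrarily large. The construction therefore works for \emph{some} scenario with a small calibrated envelope, not for \emph{any} envelope, since a huge \(\hat r\) forces \(\hat\mu_{\mathrm{corr}}=0\). Because you only need \(M>\hat r\), this costs nothing.
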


\section{Experiments configuration}  \label{app:exp_conf}

\paragraph{Membership inference attacks algorithm.}
For each datapoint \(X_i\), the attacker first computes an MIA score \(s_i\in\mathbb{R}\), where a larger score indicates greater confidence that \(X_i\) is a member (i.e., \(A_i=1\)). 
The attacker then predicts the \(c_+\) datapoints with the largest scores as members and the \(c_-\) datapoints with the smallest scores as non-members. 
All remaining datapoints are treated as abstentions, yielding a prediction vector \(T\in\{-1,0,1\}^n\).

\paragraph{Pretrained representations.}
For the image-classification experiments, we use a frozen DINOv2-B/14 model \citep{oquab2023dinov2} as the feature extractor and take its 768-dimensional class-token representation. 
For the language-modeling experiment, we use a frozen, pretrained GPT-2 model \citep{radford2019language} that is separate from the fine-tuned target model. 
We divide each paper into 64-token chunks, mean-pool the token representations within each of the first 10 chunks, and then average across chunks to obtain a 768-dimensional representation. 
The representation models are not fine-tuned during auditing.

\paragraph{Audit and reference samples.}
We construct balanced and mutually disjoint audit and reference sets. 
For CIFAR-10-C and CINIC-10, each audit set contains 5,000 target-model members and 5,000 non-members, while the reference set contains another 5,000 members and 5,000 non-members. 
For DomainNet, the corresponding sets each contain 20,000 members and 20,000 non-members. 
For arXiv, they each contain 3,000 members and 3,000 non-members. 
Thus, the total audit/reference set sizes are respectively 10,000/10,000 for the CIFAR-10 experiments, 40,000/40,000 for DomainNet, and 6,000/6,000 for each arXiv audit case. 
Reference members are drawn from the target-model training distribution but are disjoint from the audited members;
reference non-members are disjoint from the audited non-members and are drawn from the same non-member domain.

For the \(\mu\)-GDP hypothesis test, we sweep over the number \(c\) of highest-confidence membership guesses. 
We use
\[
\begin{aligned}
\text{CIFAR-10-C and CINIC-10:} &\quad c\in\{1000,2000,3000,4000\},\\
\text{DomainNet:} &\quad c\in\{4000,8000,12000,16000\},\\
\text{arXiv:} &\quad c\in\{1000,2000,2500,2800\}.
\end{aligned}
\]
For every method and audit case, we evaluate all configured guess counts and
report the count that yields the largest valid lower bound on \(\mu\).

\paragraph{Transport models.}
The linear and quantile transports use the same residual MLP backbone. 
The network first maps the input representation to 256 dimensions, applies two width-256 residual blocks, and then maps it to a 128-dimensional hidden representation. 
We use GELU activations, layer normalization, and dropout with probability \(0.1\). 
For image experiments, we concatenate a one-hot class encoding to the DINOv2 representation and perform transport within the same class; the arXiv model is fitted without class conditioning.

The linear transport has a scalar output and is trained using mean-squared error to estimate the conditional mean attack score \(m_\theta(z)\). 
Given an observed non-member score \(s\), representation \(z\), and a reference-member representation \(z'\), its transported score is
\begin{equation}
      \tilde{s}= m_\theta(z') + s - m_\theta(z),
\end{equation}
which preserves the residual around the estimated conditional mean. 
The quantile transport predicts the conditional score quantiles \(Q_\theta(\tau\mid z)\) at
\begin{equation}
    \mathcal{T} =\{0.005,0.01,0.02,0.05,0.1,0.25,0.5,0.75,
  0.9,0.95,0.98,0.99,0.995\}
\end{equation}
using the pinball loss. 
It first estimates the conditional rank of \(s\) at \(z\) and then evaluates the same rank at \(z'\), thereby preserving the sample's conditional quantile under transport.
We use 100 independent member-anchor replicates for both linear and quantile transport, which mean \(B=100\) in both \Cref{eq:linear_transport} and \Cref{eq:quantile_transport}.

Both networks are trained only on the reference data using Adam, with learning rate \(10^{-3}\), weight decay \(10^{-4}\), and at most 250 epochs. 
We reserve 20\% of the training data for validation and apply early stopping with patience 20 and minimum improvement \(10^{-4}\).

\paragraph{Shift-aware calibration.}
To calibrate the support-aware shrinkage correction, we use 50 slab fractions uniformly spaced from \(0.5\) to \(0.1\), inclusive:
\begin{equation}
       f_j = 0.5-\frac{0.4j}{49}, \qquad j=0,\ldots,49.
\end{equation}
For each fraction \(f_j\), the bottom \(f_j\) fraction of the reference non-members along the estimated shift direction is used as the source slab, and the top \(f_j\) fraction is used as the target slab.
Consequently, smaller fractions correspond to more widely separated source and target slabs. 
The resulting displacement-error pairs are used to estimate the transport error at the observed audit displacement.

\paragraph{Bootstrap Algorithm} 
We adopt the same bootstrap algorithm as in \citep{cebere2026privacy} and list it here in \Cref{alg:bootstrap} for reference. 
We resample the reference members and non-members and rerun \Cref{alg:method} for each bootstrap replicate, including score transport, reconstruction of the calibration tasks, and estimation of the transport-induced audit error. 
This produces a corrected privacy lower bound for each replicate, from which we report a lower bootstrap quantile.

\begin{algorithm}[t]
\caption{Bootstrap upper confidence estimate}
\label{alg:bootstrap}
\begin{algorithmic}[1]
\Statex \textbf{Inputs:} Confidence probability \(1-p'\), number of bootstraps
\(K\in\mathbb{N}^*\),
reference member and non-member sample \(\mathcal D_M^{\text{ref}}, \mathcal{D}_N^{Q,\text{ref}}\).

\For{\(k \in \{1,\ldots,K\}\)}
    \State \(\mathcal D_M^k, \mathcal{D}_N^k \gets\) sample \(n_M^{\text{ref}},n_{N}^{Q,\text{ref}}\) points from
    \(\mathcal D_M^{\text{ref}}, \mathcal{D}_N^{Q,\text{ref}}\) with replacement.
    \State \(\hat \mu_{\text{corr}}^k\gets\) Run \Cref{alg:method} on \(\mathcal D_M^k, \mathcal{D}_N^k\).
\EndFor

\State \(\hat\mu_{\text{corr}}\gets\) Run \Cref{alg:method} on \(\mathcal D_M^{\text{ref}}, \mathcal{D}_N^{Q,\text{ref}}\).

\State
\(
    \tilde{\mu}_{\text{corr}}^k
    \gets
    \hat{\mu}^k_{\text{corr}}
    + \hat{\mu}_{\text{corr}}
    - \operatorname{median}
    \left(
    (\hat{\mu}^k_{\text{corr}})_{k\in[K]}
    \right).
\)

\State \Return
\(
    \operatorname{Quantile}
    \left(
    (\tilde{\mu}_{\text{corr}}^k)_{k\in[K]},\,p'
    \right).
\)
\end{algorithmic}
\end{algorithm}

\section{Ablation results} \label{app:ablation}

\paragraph{Ablation on the conservative correction.}
To assess the role of the conservative correction, we ablate this component and directly audit the transported non-member scores. Results are shown in \Cref{fig:ablation_eta}. 
For comparison, we also report the oracle bound and the bound obtained directly from the OOD non-members. 
Nominal transport alone can debias the shift to a certain level, but still may produce privacy lower bounds above the oracle, showing that score transport by itself does not ensure validity. 
The conservative correction substantially reduces this overestimation by estimating an upper envelope of the transport-induced audit error. 

\begin{figure}[htbp] 
    \begin{center}          
    \centering
        \includegraphics[width=\textwidth]{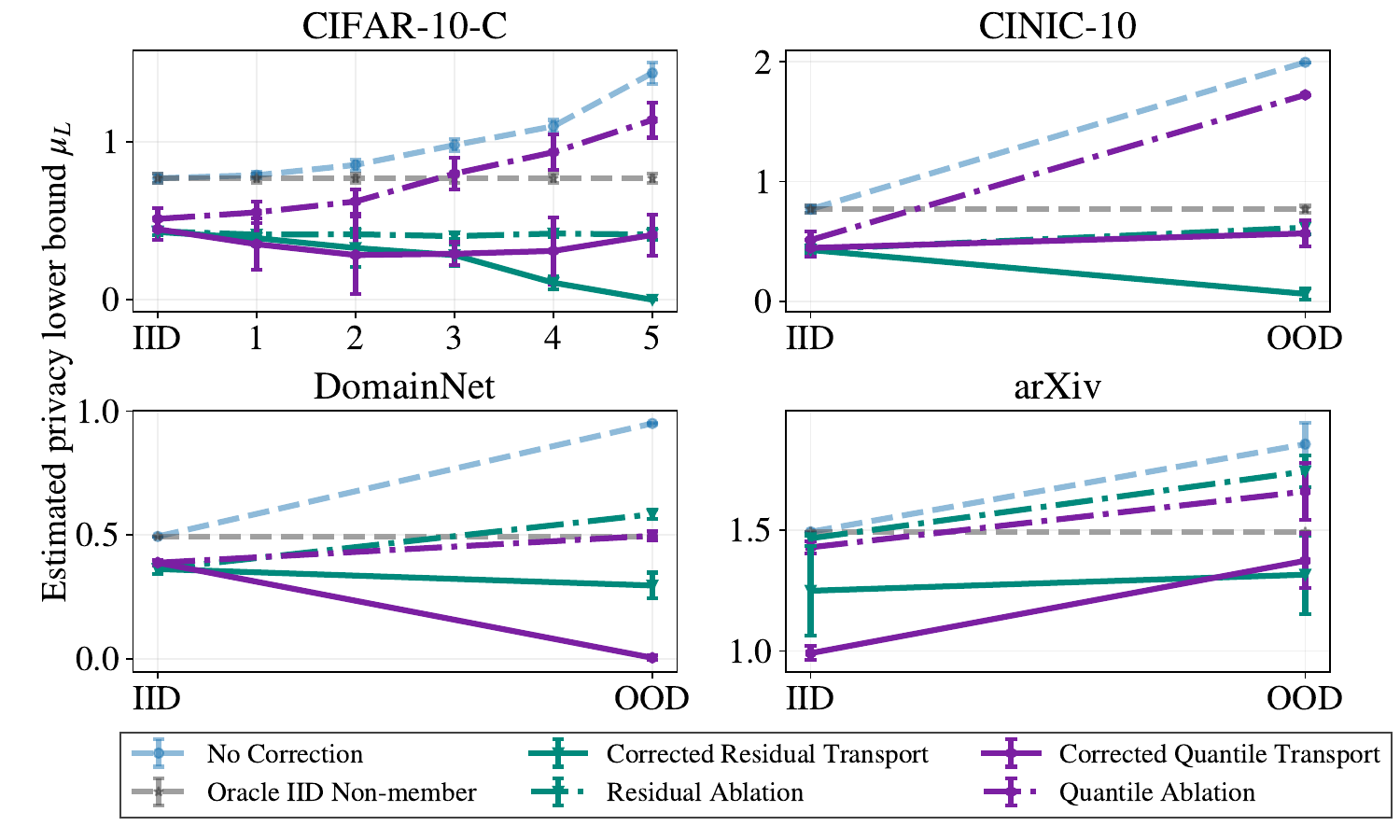}     
    \caption{{Ablation of the conservative correction across datasets. }
    Solid lines are the two proposed methods. Dash lines are the corresponding ablation variants without correction.
    Nominal transport can still exceed the oracle bound, whereas the conservative correction reduces this overestimation.}
    \label{fig:ablation_eta}
    \end{center}
    \vskip -0.2in
\end{figure}
\paragraph{Ablation on the number of fractions.}
Our calibration procedure partitions the available range into multiple fractions to estimate the relationship between distribution shift and auditing error. 
We study the sensitivity of our method to this hyperparameter by varying the number of fractions \(\in\{20,50,100\}\). 
Figure~\ref{fig:ablation_fraction} reports the resulting privacy lower bounds for both residual and quantile transport across all datasets. 
Overall, the estimates and their trends are consistent across the three choices, with only minor variations relative to the uncertainty of the estimates. 
This suggests that the proposed method is not sensitive to the exact number of fractions used in the calibration procedure.
\begin{figure}[htbp] 
    \begin{center}          
    \centering
        \includegraphics[width=\textwidth]{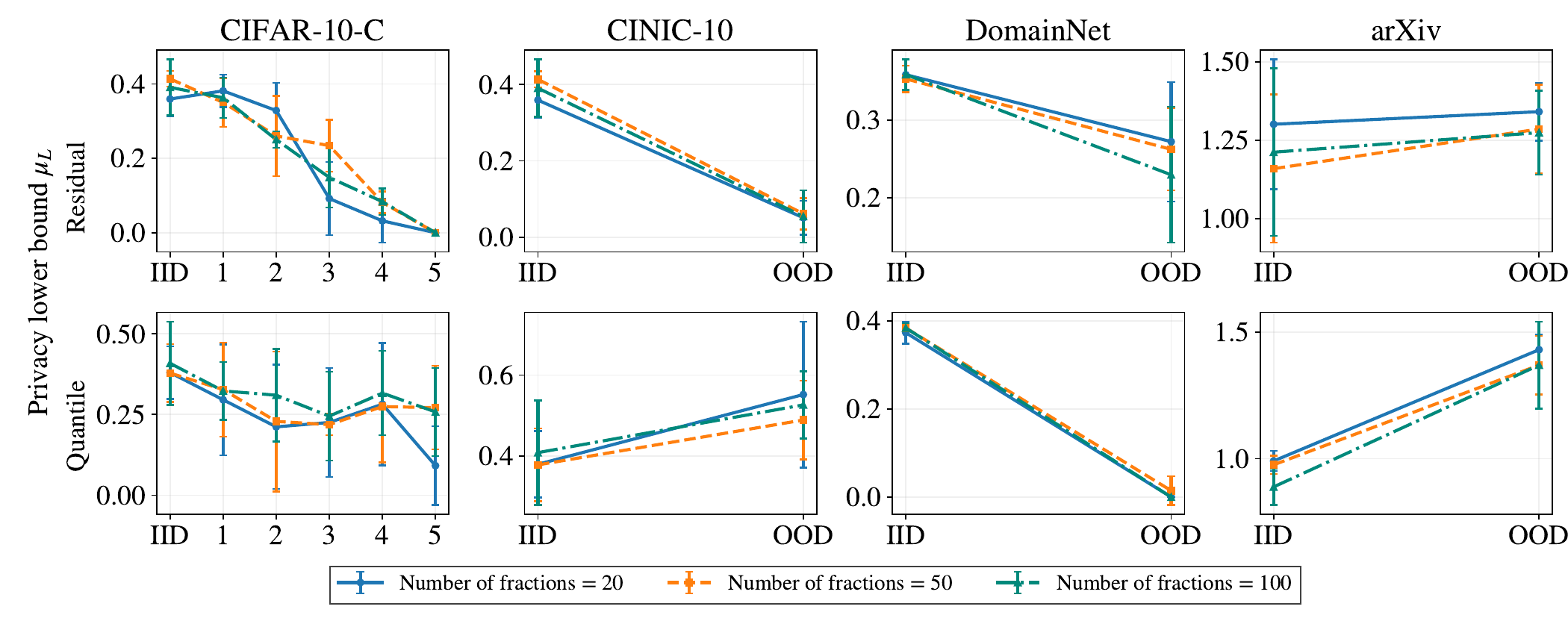}     
    \caption{\textbf{Ablation on number of fractions.} Privacy lower bounds obtained with residual transport (top) and quantile transport (bottom) for number of fractions \(\in\{20,50,100\}\) across all datasets. The estimates remain similar across different choices, indicating that our method is robust to the number of fractions used for calibration. Error bars show standard deviation for 5 seeds.}
    \label{fig:ablation_fraction}
    \end{center}
    \vskip -0.1in
\end{figure}

\section{Additional diagnostics for conditional transportability}
We provide the full set of PIT-based diagnostics for the conditional transportability across datasets and distribution shifts. 
Following the analysis in \Cref{sec:diagnostic}, we partition samples according to the estimated domain propensity score \(\hat{\pi}(Z)\) and compare the empirical CDFs of the PIT values for IID and OOD non-members within each propensity bin. 
Under conditional invariance, the two distributions should be similar within regions of common support; deviations between the red and blue curves indicate potential violations of this assumption. 
The dashed diagonal shows the uniform reference distribution, while gray panels indicate propensity bins with insufficient common support for a reliable comparison. Figures~\ref{fig:pit_cifar10c} and~\ref{fig:pit_additional} show the results across CIFAR-10-C corruption severities and the remaining benchmark settings, respectively.
The results indicate violations of the conditional invariance assumption for CINIC-10 and DomainNet. 
IID non-member curves below the diagonal indicate that the observed MIA scores are larger than those predicted from the OOD reference distribution. 
For CIFAR-10-C and arXiv, the gray panels indicate insufficient common support, suggesting that transport in these regions may require extrapolation.

 \begin{figure}[htbp] 
        \begin{center}
            \begin{minipage}[t]{0.9\textwidth}
                \centering
                \includegraphics[width=\textwidth]{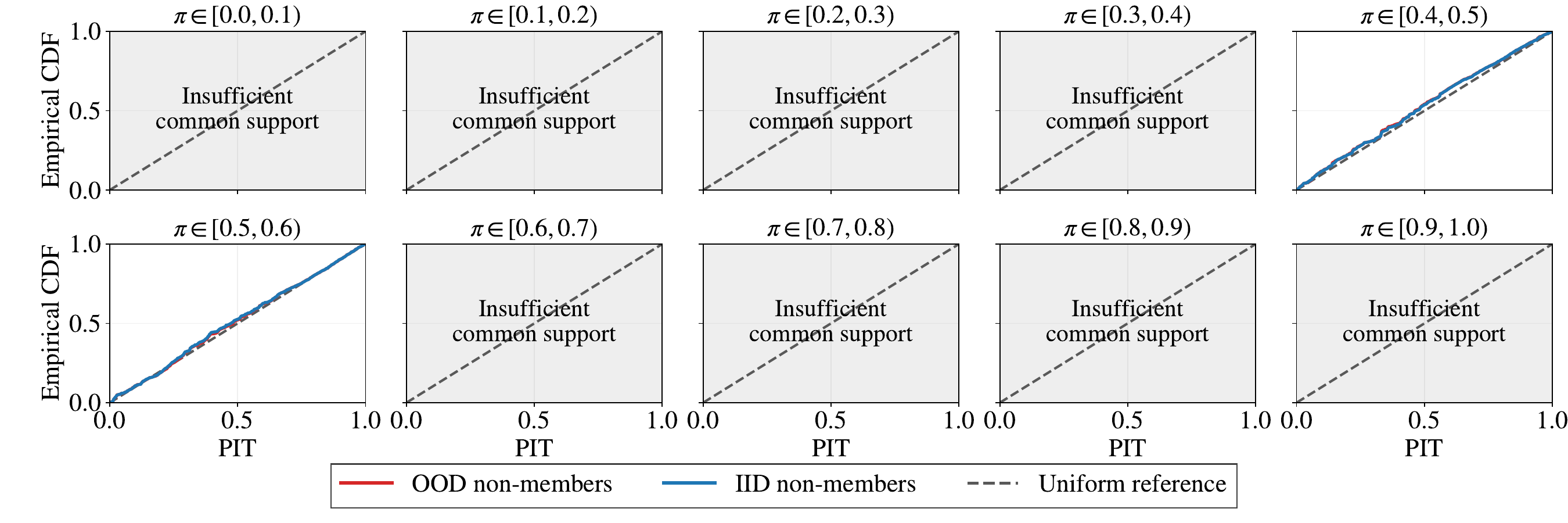}
                \subcaption{CIFAR-10-C with severity level: 1}
            \end{minipage}
            \begin{minipage}[t]{0.9\textwidth}
                \centering
                \includegraphics[width=\textwidth]{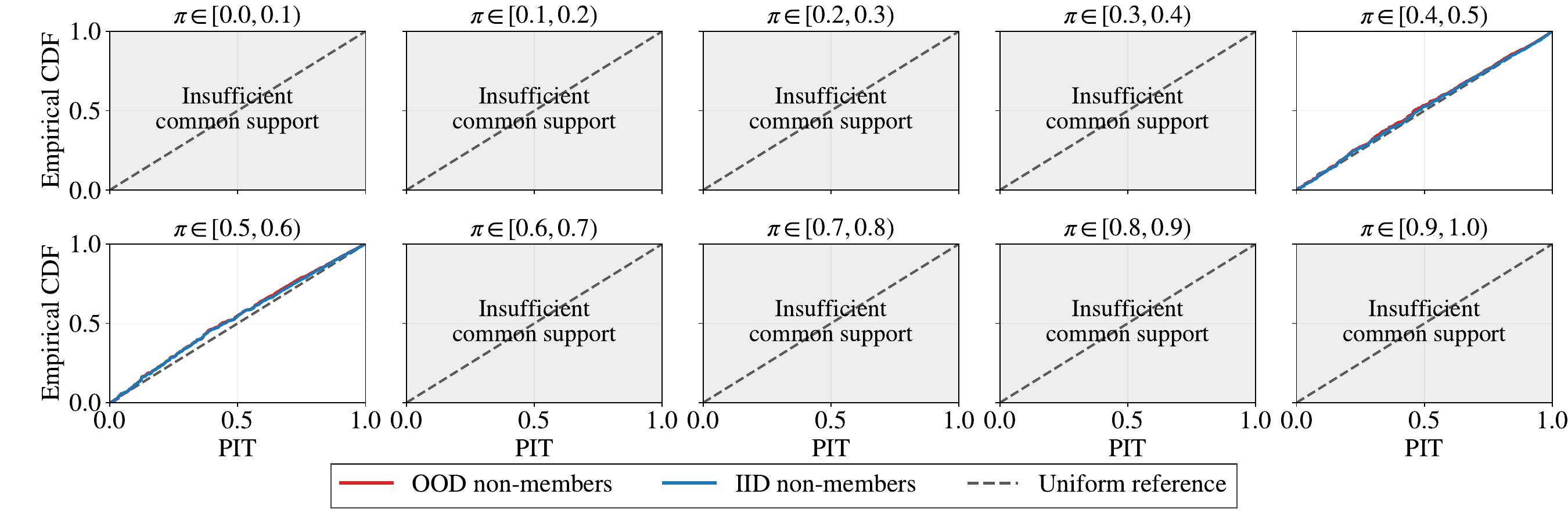}
                 \subcaption{CIFAR-10-C with severity level: 2}
            \end{minipage}
             \begin{minipage}[t]{0.9\textwidth}
                \centering
                \includegraphics[width=\textwidth]{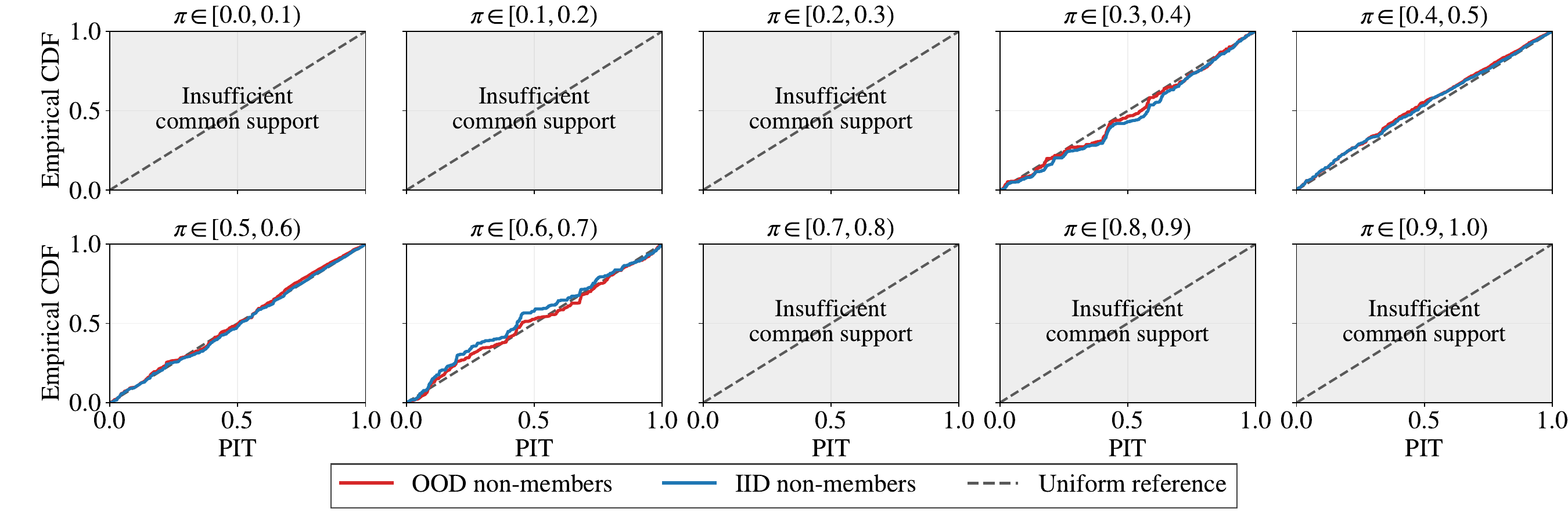}
                 \subcaption{CIFAR-10-C with severity level: 3} 
            \end{minipage}
             \begin{minipage}[t]{0.9\textwidth}
                \centering
                \includegraphics[width=\textwidth]{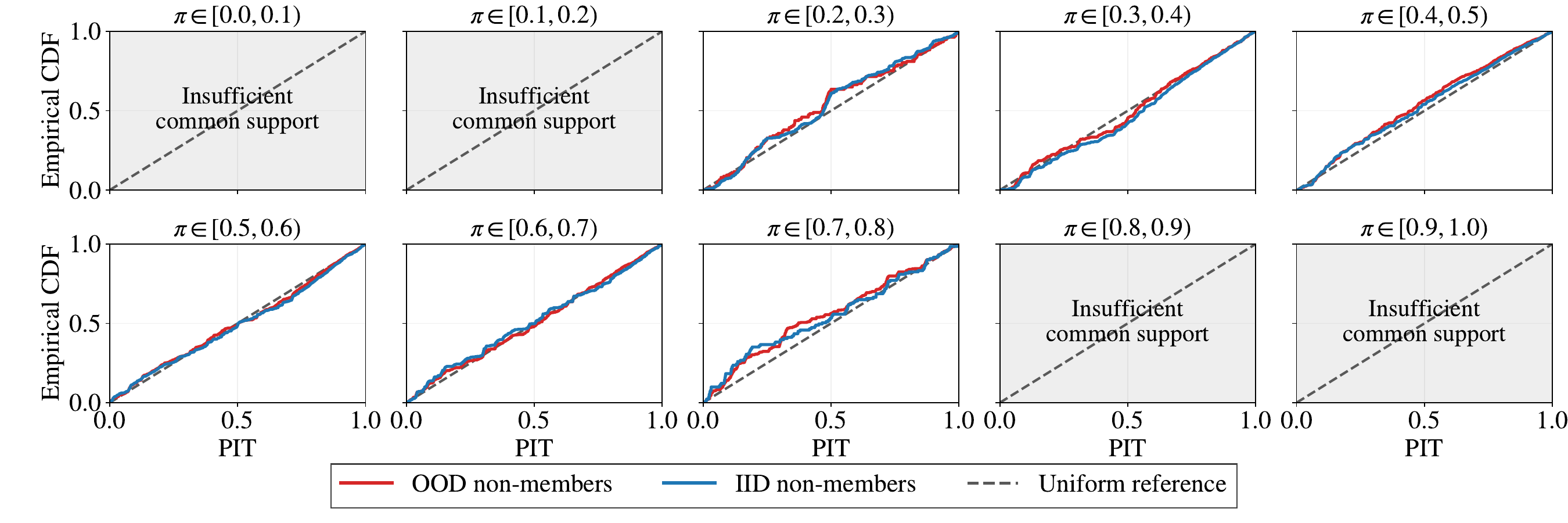}
                 \subcaption{CIFAR-10-C with severity level: 4}
            \end{minipage}
            \caption{\textbf{Conditional-invariance diagnostics for CIFAR-10-C (severities 1–4).} Empirical CDFs of PIT values for OOD (red) and IID (blue) non-members are compared within estimated domain-propensity bins. Agreement between the curves supports conditional invariance. Dashed lines indicate the uniform reference; gray panels indicate insufficient common support.}
            \label{fig:pit_cifar10c}
        \end{center}
    \end{figure}

 \begin{figure}[htbp] 
        \begin{center}
            \begin{minipage}[t]{0.9\textwidth}
                \centering
                \includegraphics[width=\textwidth]{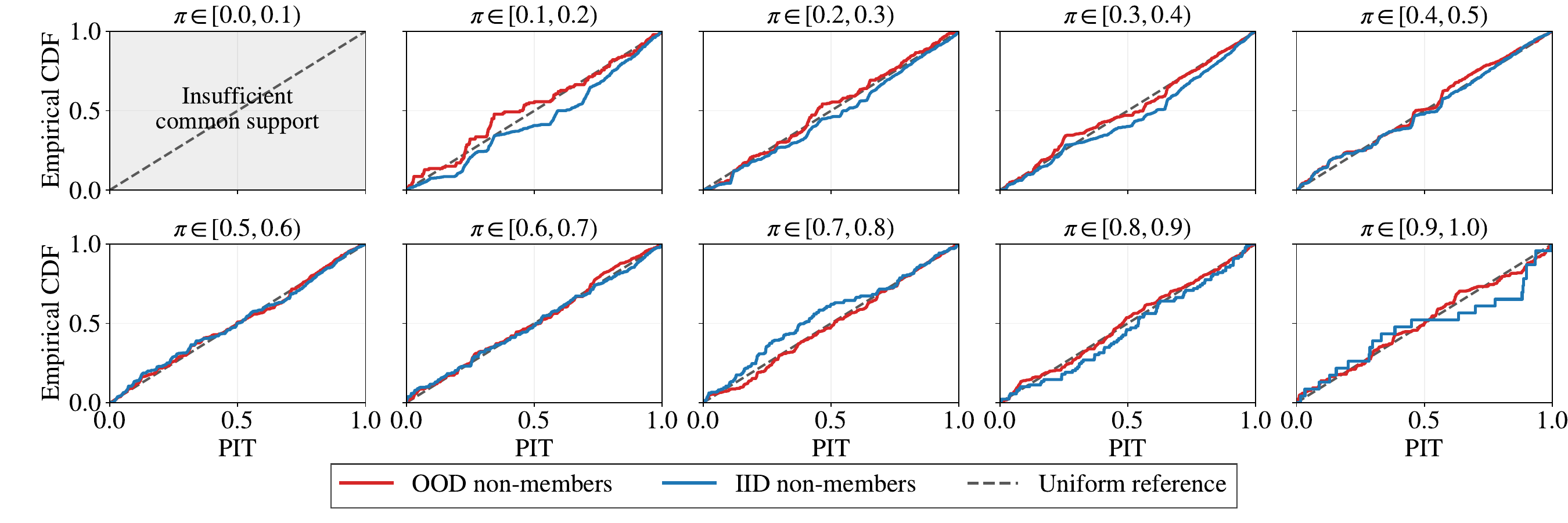}
                \subcaption{CIFAR-10-C with severity level: 5}
            \end{minipage}
            \begin{minipage}[t]{0.9\textwidth}
                \centering
                \includegraphics[width=\textwidth]{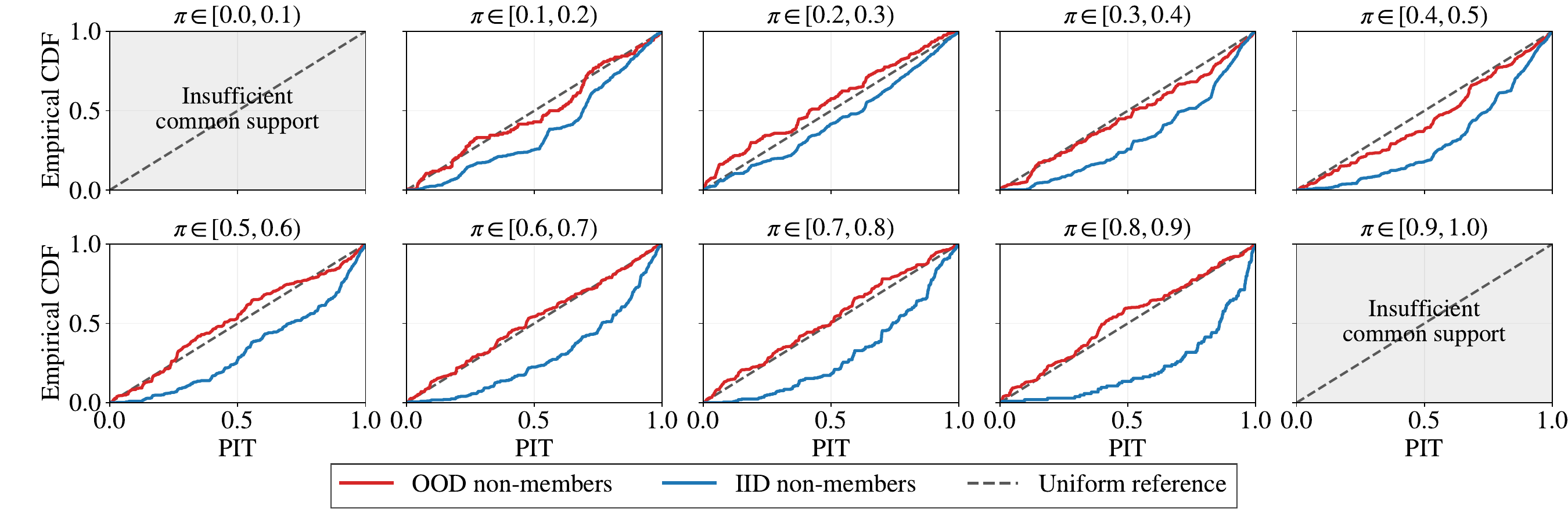}
                 \subcaption{CINIC-10 }
            \end{minipage}
             \begin{minipage}[t]{0.9\textwidth}
                \centering
                \includegraphics[width=\textwidth]{Figures/domainnet.pdf}
                 \subcaption{DomainNet} 
            \end{minipage}
             \begin{minipage}[t]{0.9\textwidth}
                \centering
                \includegraphics[width=\textwidth]{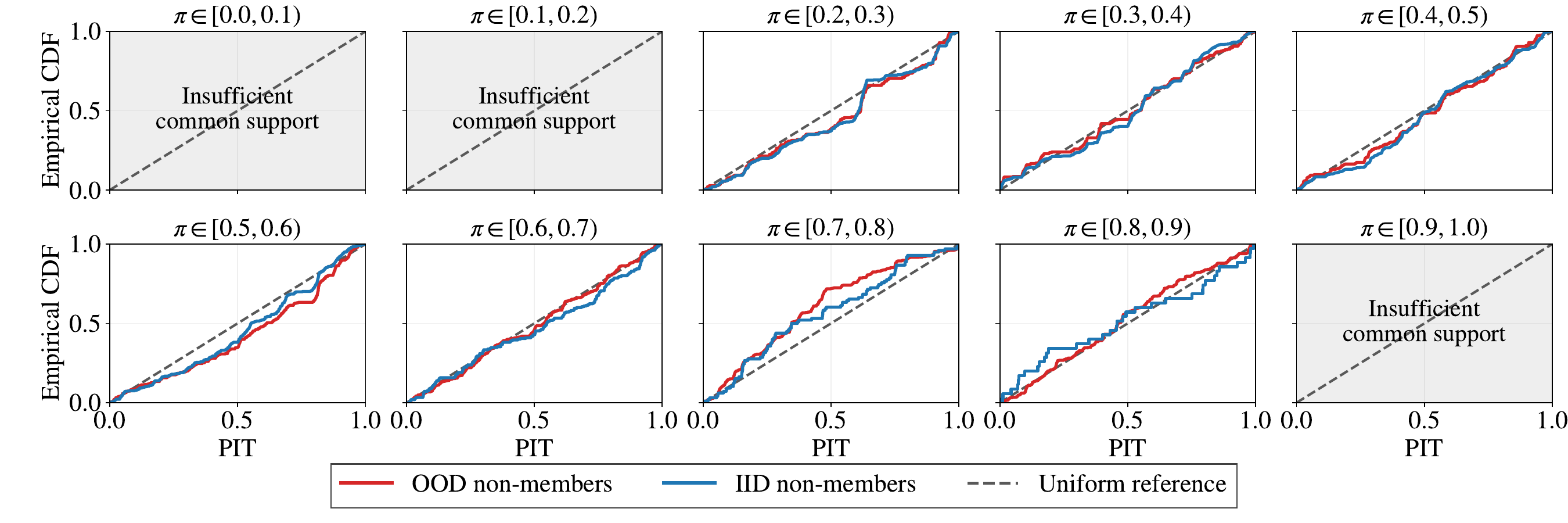}
                 \subcaption{arXiv}
            \end{minipage}
            \caption{\textbf{Conditional-invariance diagnostics for additional distribution shifts.} PIT empirical CDFs for OOD ({red}) and IID ({blue}) non-members across CIFAR-10-C severity 5, CINIC-10, DomainNet, and arXiv. Curves are compared within estimated domain-propensity bins; gray panels indicate insufficient common support. Agreement between the two curves indicates greater consistency with conditional invariance, while IID non-member curves below the diagonal indicate larger MIA scores than predicted by the OOD reference distributions.}
            \label{fig:pit_additional}
        \end{center}
    \end{figure}

To characterize the degree of distribution shift and the available overlap between IID and OOD non-members, we examine the estimated domain propensity score \(\hat{\pi}(Z)=\Pr(D=\mathrm{OOD}\mid Z,\text{non-member})\).
Figure~\ref{fig:domain_propensity} shows its distribution for the two domains across all datasets. 
When the two distributions substantially overlap, IID and OOD samples share regions of representation space in which conditional comparisons are better supported. 
In contrast, strong separation indicates that the domains are easily distinguishable and that conditional transport must rely more heavily on extrapolation. 
As expected, the separation generally increases with corruption severity for CIFAR-10-C, while datasets such as DomainNet exhibit considerably weaker overlap.

\begin{figure}[htbp] 
    \begin{center}          
    \centering
        \includegraphics[width=\textwidth]{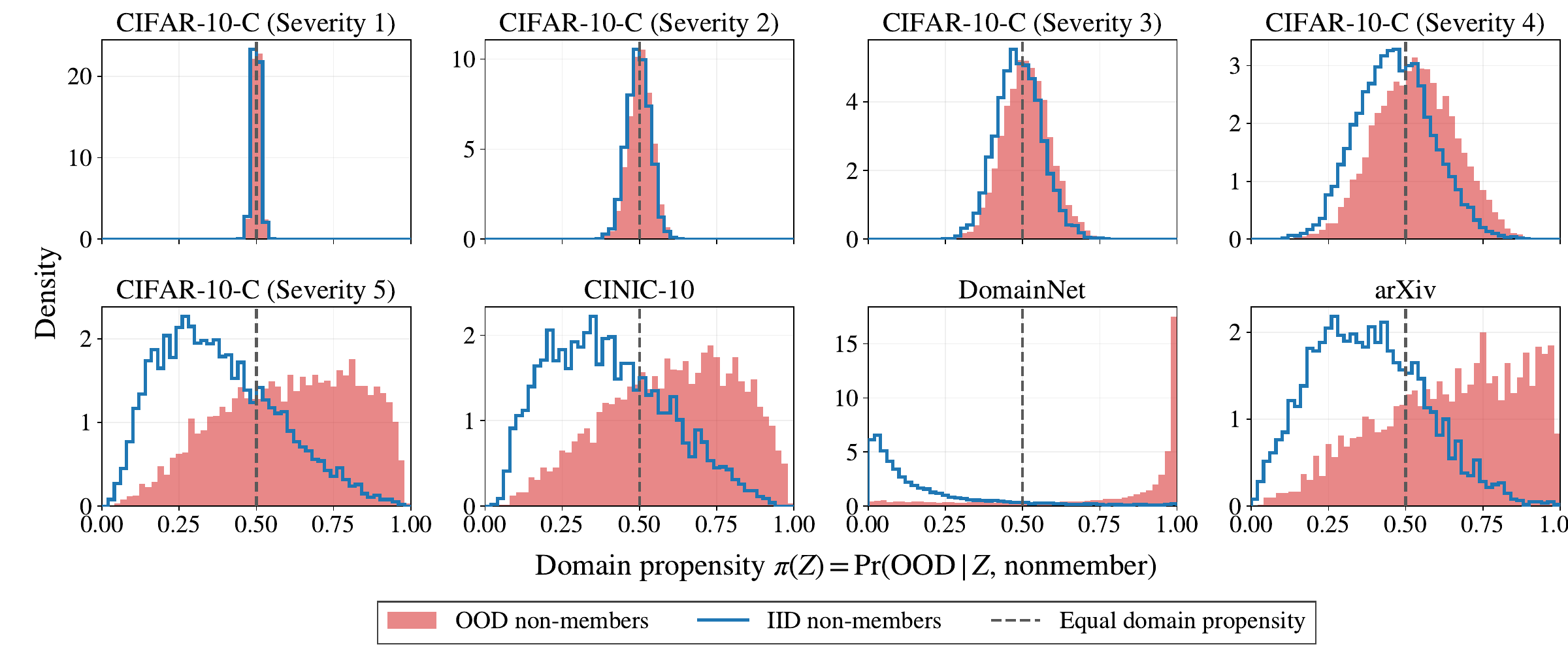}     
    \caption{\textbf{Domain-propensity distributions across datasets. }Distributions of \(\hat{\pi}(Z)=\Pr(D=\mathrm{OOD}\mid Z,\text{non-member})\) for IID (blue) and OOD (red) non-members. The dashed line marks equal domain propensity, \(\hat{\pi}(Z)=0.5\). Separation between the distributions reflects domain distinguishability, while their overlap indicates common support.}
    \label{fig:domain_propensity}
    \end{center}
    \vskip -0.1in
\end{figure}

\section{Recovering a conservative transported score distribution}
Our method operates directly at the level of the privacy estimate: given the privacy lower bound \(\hat\mu_{\mathrm{trans}}\) obtained from the nominal transported non-member distribution and the calibrated error estimate \(\hat r\), we report
\begin{equation}
    \hat\mu_{\mathrm{corr}} = \max(0,\hat\mu_{\mathrm{trans}}-\hat r). 
\end{equation}

For applications where a corrected non-member score distribution is nevertheless useful, we construct one whose induced privacy estimate is consistent with this scalar correction. Intuitively, we conservatively shrink the nominal transported distribution toward the null distribution, represented by the empirical member-score distribution. When the non-member distribution coincides with the member distribution, the two populations are indistinguishable from their scores and the corresponding privacy lower bound is zero.

Let \(\tilde F_{S_N^P}\) denote the nominal transported non-member score distribution and \(F_{S_M}\) the empirical member-score distribution. We interpolate between them in quantile space:
\begin{equation}
     F^{-1}_{\gamma}(u) = (1-\gamma)\tilde F_{S_N^P}^{-1}(u) + \gamma F_{S_M}^{-1}(u), \qquad u\in(0,1),
\end{equation}
evaluated on the midpoint grid \(u_j=(j-\tfrac12)/n\). Here, \(\gamma\in[0,1]\) controls the amount of conservative retreat: \(\gamma=0\) retains the nominal transported distribution, whereas \(\gamma=1\) recovers the member distribution.
We then choose the smallest amount of retreat whose induced privacy lower bound does not exceed the calibrated target:
\begin{equation}
    \gamma^* = \min \left\{ \gamma\in\mathcal G: \hat\mu(F_\gamma;F_{S_M}) \le \hat\mu_{\mathrm{corr}} \right\}, \qquad \mathcal G= \left\{ 0,\frac1G,\frac2G,\ldots,1 \right\}.
\end{equation}
The resulting \(F_{\gamma^*}\) therefore provides a distribution-level realization of our conservative scalar correction. Because \(\gamma\) is searched over a finite grid, its induced privacy estimate need not equal \(\hat\mu_{\mathrm{corr}}\) exactly, but is constrained not to exceed it.

\begin{lemma}
    Fix all member scores \(\mathcal S_M\), all non-member scores \(\mathcal S_N^Q\) except for \(s_i\), and fixed tail sizes \(k_-\) and \(k_+\). Assume scores are distinct (or ties are resolved by a fixed rule). If record \(i\) is a non-member, then the number of correct guesses \(k(\mathcal S_N^Q, \mathcal S_M)\) is non-increasing in \(s_i\).
\end{lemma}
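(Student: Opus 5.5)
The plan is to reduce the claim to the effect of moving a single score across the fixed decision rule of the attack in Appendix~\ref{app:exp_conf}. All $n$ scores are pooled. The $k_+$ largest are guessed as members ($T=1$), the $k_-$ smallest are guessed as non-members ($T=-1$), and the rest abstain. Write $k=\sum_j \mathbf 1\{T_j=A_j\}$. Because record $i$ has $A_i=-1$, its own guess is correct exactly when it lies in the bottom $k_-$ tail. Each other record $j$ is correct exactly when it lies in the tail matching its label.

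Fix $s_i<s_i'$. With distinct scores (or a fixed tie rule), all other records keep their relative order, and only record $i$'s rank moves upward. I would pass from $s_i$ to $s_i'$ by continuously increasing the score and tracking the discrete events at which record $i$ crosses some other score $s_j$. Between crossings the ranking does not change, so $k$ is constant. It therefore suffices to show that each single crossing, in which $i$ swaps ranks with one adjacent record $j$ that was immediately above it, cannot increase $k$. An elementary swap only changes tail membership when the swapped pair sits on a tail boundary. That leaves two cases.

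In the first case, $i$ and $j$ straddle the lower boundary: $i$ was in the bottom $k_-$ and $j$ abstained, and after the swap $j$ enters the bottom tail while $i$ leaves it. Before the swap, $i$ contributed $1$ and $j$ contributed $0$. After the swap, $i$ contributes $0$ and $j$ contributes $\mathbf 1\{A_j=-1\}\le 1$. The net change is $\mathbf 1\{A_j=-1\}-1\le 0$.

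In the second case, $i$ and $j$ straddle the upper boundary: $j$ was in the top $k_+$ and $i$ abstained, and after the swap $i$ enters the top tail while $j$ drops to abstention. Record $i$ goes from $0$ to $0$, since a non-member guessed as a member is wrong. Record $j$ goes from $\mathbf 1\{A_j=1\}$ to $0$. The net change is $-\mathbf 1\{A_j=1\}\le 0$. If the abstention region is empty, both boundaries coincide and the same accounting applies: $i$ loses $1$, and $j$ changes from $\mathbf 1\{A_j=1\}$ to $\mathbf 1\{A_j=-1\}$, for a net change $\le 0$. Every other record keeps its tail and so its contribution is unchanged. Chaining the crossings gives $k$ at $s_i'$ at most $k$ at $s_i$.

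The main obstacle is bookkeeping rather than any real idea. The proof must stress that $k_\pm$ are fixed counts and not score thresholds, and that each crossing changes exactly one adjacent pair in the ordering. Ties must be handled through the stated fixed rule so that every crossing is a single well-defined transposition.
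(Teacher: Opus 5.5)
Your proof is correct and follows the paper's argument essentially step for step: you track the single-score crossings as $s_i$ increases, note that only crossings at a tail boundary matter, and check that the lower-boundary case gives $\Delta k=\mathbf 1\{A_j=-1\}-1\le 0$ and the upper-boundary case gives $\Delta k=-\mathbf 1\{A_j=1\}\le 0$. Your extra case of an empty abstention region ($k_-+k_+=n$, where one swap moves $i$ directly from the bottom tail to the top tail) is a small refinement that the paper's two-case accounting leaves implicit.
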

\begin{proof}
As \(s_i\) increases, its rank can change only when it crosses another score \(s_j\). 
Crossings that do not change membership of the bottom \(k_-\) or top \(k_+\) tails leave \(k\) unchanged. 
Thus only two cases matter.
\begin{enumerate}[leftmargin=2em]
    \item \(i\) leaves the bottom tail and \(j\) enters it. 
    Before the crossing, \(i\) is correctly guessed as a non-member and contributes \(1\).
    After the crossing, \(j\) contributes \(1\) only if \(j\) is also a non-member. 
    Hence
    \begin{equation}
        \Delta k = -1+\mathbf 1\{j\text{ non-member}\} \le 0. 
    \end{equation}
    \item \(i\) enters the top tail and displaces \(j\). 
    Since \(i\) is a non-member, the member guess is incorrect, so \(i\) contributes \(0\). 
    The displaced record \(j\) loses a correct guess only when \(j\) is a member. 
    Hence
    \begin{equation}
        \Delta k = -\mathbf 1\{j\text{ member}\} \le 0.
    \end{equation}
\end{enumerate}

\end{proof}

\begin{corollary}
    If \(s_{i,N}^Q\le s_{i,N}^{Q'}\) for every \(i\), then \(\hat\mu(\mathcal S_N^{Q'},\mathcal S_M)\le\hat\mu(\mathcal S_N^{Q},\mathcal S_M).\)
\end{corollary}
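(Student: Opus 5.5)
The plan is to derive the corollary from the preceding lemma in two stages: first show that the number of correct guesses does not increase when the non-member scores are raised, and then show that the lower bound \(\mu_L(c,k,n)\) is non-decreasing in \(k\) when \(c\) and \(n\) are held fixed.

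\emph{Stage 1: from one coordinate to all coordinates.} Fix the member scores \(\mathcal S_M\) and the tail sizes \(k_-,k_+\), so that \(c=k_-+k_+\) and \(n\) are unchanged. Go from \(\mathcal S_N^Q\) to \(\mathcal S_N^{Q'}\) by raising the non-member scores one at a time. Define the intermediate vectors \(\mathcal S^{(0)}=\mathcal S_N^Q\) and, for \(m\ge 1\), \(\mathcal S^{(m)}\) as the vector whose first \(m\) coordinates are taken from \(\mathcal S_N^{Q'}\) and whose remaining coordinates are taken from \(\mathcal S_N^Q\). Consecutive vectors differ only in the single non-member coordinate \(s_m\), and that coordinate increases from \(s_{m,N}^Q\) to \(s_{m,N}^{Q'}\). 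The lemma then gives \(k(\mathcal S^{(m)},\mathcal S_M)\le k(\mathcal S^{(m-1)},\mathcal S_M)\). Chaining these inequalities over \(m=1,\dots,n_Q\) yields
\[
k(\mathcal S_N^{Q'},\mathcal S_M)\le k(\mathcal S_N^Q,\mathcal S_M).
\]
The lemma needs distinct scores, or a fixed tie-breaking rule. I would apply the same fixed rule at every step and note that the lemma's crossing argument covers ties under that rule.

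\emph{Stage 2: monotonicity of \(\mu_L\) in \(k\).} Here I would argue directly from how \(\mu_L(c,k,n)\) is constructed in the one-run audit of \citet{mahloujifar2025auditing}. It is the largest \(\mu\) such that the hypothesis "\(\mu\)-GDP" is rejected, and rejection happens when the probability under that hypothesis of at least \(k\) correct guesses out of \(c\) is at most \(p\). This tail probability is non-increasing in \(k\), so the set of rejected \(\mu\) grows with \(k\), and \(\mu_L(c,\cdot,n)\) is therefore non-decreasing. Combining this with Stage 1, for every fixed \(c\) we get \(\mu_L(c,k',n)\le\mu_L(c,k,n)\).

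\emph{Stage 3: the sweep over \(c\).} The reported \(\hat\mu\) is a maximum over the configured guess counts \(c\). Stages 1 and 2 give the inequality for each \(c\) separately, and a pointwise inequality is preserved when taking the maximum. This gives \(\hat\mu(\mathcal S_N^{Q'},\mathcal S_M)\le\hat\mu(\mathcal S_N^Q,\mathcal S_M)\).

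I expect Stage 2 to be the main obstacle. The paper treats \(\mu_L\) as a black box, so this step requires stating and justifying monotonicity of the tail bound in \(k\). The cleanest route I see is to say that the test statistic is the count of correct guesses and the rejection region has the form \(\{k\ge k_0(\mu)\}\); if \(k\) is correct, any \(k'\ge k\) is also in that region.
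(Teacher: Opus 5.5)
Your proposal is correct and follows the paper's argument. The lemma shows that the number of correct guesses \(k\) cannot increase when non-member scores are raised, and \(\mu_L(c,k,n)\) is non-decreasing in \(k\), which is the paper's one-line justification. Your additional steps only make explicit what the paper leaves implicit: the coordinate-by-coordinate chaining, the nested rejection regions giving monotonicity in \(k\), and the observation that taking the maximum over the swept guess counts \(c\) preserves the pointwise inequality.
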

This follows because \(\mu\) is non-decreasing in \(k\).
Raising non-member scores never raises the bound.

\begin{figure}[htbp] 
    \begin{center}          
    \centering
        \includegraphics[width=\textwidth]{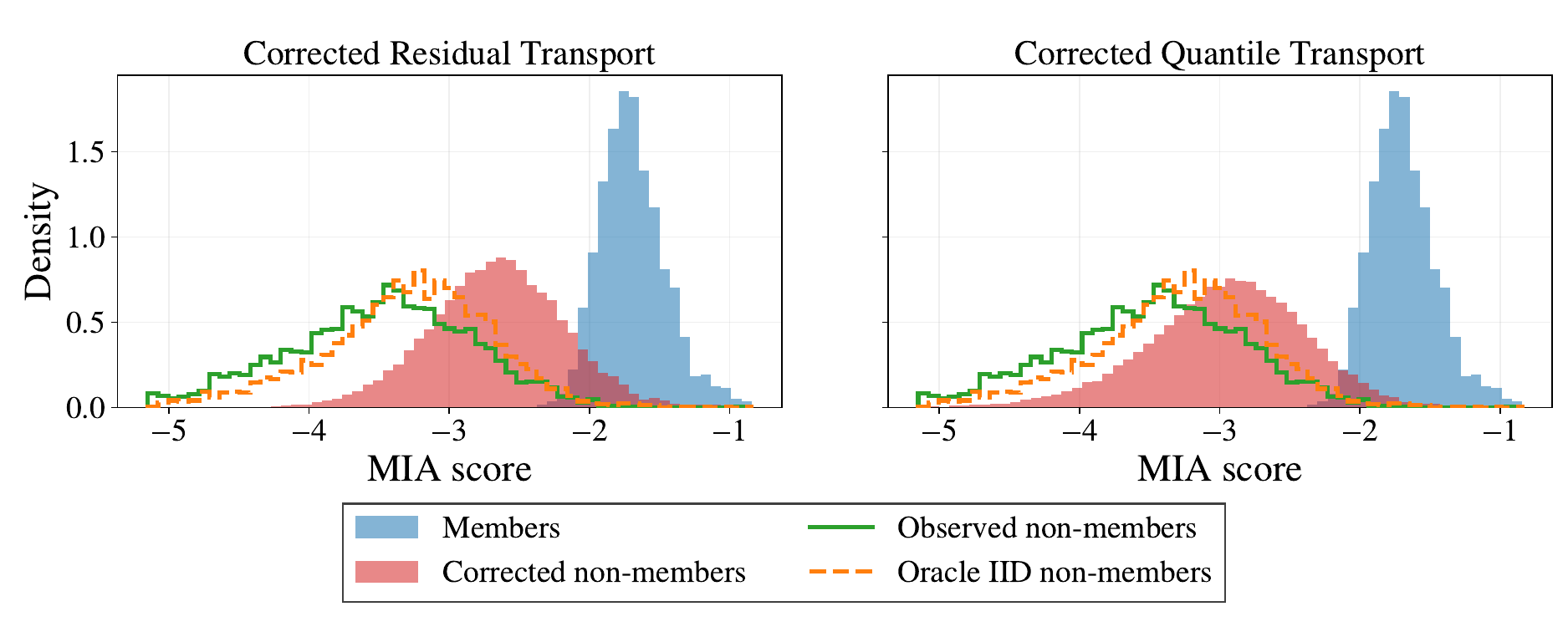}     
    \caption{\textbf{Distribution-level realization of the conservative correction.}  Results are shown for residual transport (left) and quantile transport (right).}
    \label{fig:transported_scores}
    \end{center}
    \vskip -0.1in
\end{figure}

Figure~\ref{fig:transported_scores} illustrates this optional distribution-level correction for both residual and quantile transport on arXiv data. 
The nominal transported non-member distribution is retreated toward the member distribution until its induced privacy lower bound reaches the calibrated conservative target. 
This visualization shows how the scalar correction used in the main method can be translated into a conservative correction of the transported score distribution.

Figure~\ref{fig:transported_scores} also helps explain the relative performance of our two transport variants. 
On arXiv, the shift between the OOD and IID non-member MIA score distributions is largely a change in location, while their variances remain similar. Consequently, both residual and quantile transport perform well. 
In contrast, DomainNet exhibits relatively limited support overlap, as shown in \Cref{fig:domain_propensity}. 
The quantile model must therefore estimate conditional score distributions in feature regions that are poorly represented in the source data.
Because conditional tail estimation is particularly sensitive to extrapolation and becomes unreliable when source support is sparse, corrected quantile transport performs worse than residual transport in this setting \citep{liang2024blessings}.




\end{document}